\pgfplotsset{compat=1.9}
\newcommand\sN{\ensuremath{\mathcal{N}}}
\newcommand\sS{\ensuremath{\mathcal{S}}}
\newcommand\R{\ensuremath{\mathbb{R}}} % Real numbers
\newcommand\eqdef{\ensuremath{\stackrel{\rm def}{=}}} % Equal by definition
\begin{document}

\newcommand{\ar}[1]{{\bf \color{blue} Aditi: #1}}
\newcommand{\pj}[1]{{\bf \color{red} Prateek: #1}}
\newcommand{\pset}{\sS}
\newcommand{\negset}[1]{N_{#1}}
\newcommand{\negsetlarge}[1]{\bar{N}_{#1}}
\newcommand{\thetaoc}{\theta^\text{svdd}}
\newcommand{\thetadrocc}{\theta^\text{drocc}}
\newcommand{\thetalf}{\theta^\text{lf}}
\newcommand{\elldrocc}{\ell^\text{dr}}
\newcommand{\ngrad}{m}
\newcommand{\ninit}{n_0}

\newcommand{\drocc}{\ensuremath{{\rm \text{DROCC}}}\xspace}
\newcommand{\droccrand}{\ensuremath{{\rm \text{DROCC--Rand}}}\xspace}
\newcommand{\lfoc}{\ensuremath{{\rm OCLN}}\xspace}
\newcommand{\muoc}{\mu^\text{oc}}
\newcommand{\roc}{R^\text{oc}}
\newcommand{\mudrocc}{\mu^\text{drooc}}
\newcommand{\rdrocc}{R^\text{drooc}}
\newcommand{\drocclf}{\ensuremath{{\rm \text{DROCC--LF}}}\xspace}
\newcommand{\droccoe}{\ensuremath{{\rm \text{DROCC--OE}}}\xspace}

\twocolumn[
% The \icmltitle you define below is probably too long as a header.
% Therefore, a short form for the running title is supplied here:
\icmltitlerunning{DROCC: Deep Robust One-Class Classification}

\icmltitle{DROCC: Deep Robust One-Class Classification}

% It is OKAY to include author information, even for blind
% submissions: the style file will automatically remove it for you
% unless you've provided the [accepted] option to the icml2019
% package.

% List of affiliations: The first argument should be a (short)
% identifier you will use later to specify author affiliations
% Academic affiliations should list Department, University, City, Region, Country
% Industry affiliations should list Company, City, Region, Country

% You can specify symbols, otherwise they are numbered in order.
% Ideally, you should not use this facility. Affiliations will be numbered
% in order of appearance and this is the preferred way.
\icmlsetsymbol{note}{*}
\begin{icmlauthorlist}
\icmlauthor{Sachin Goyal}{msr}
\icmlauthor{Aditi Raghunathan}{stanford,note}
\icmlauthor{Moksh Jain}{nitk,note}
\icmlauthor{Harsha Simhadri}{msr}
\icmlauthor{Prateek Jain}{msr}
\end{icmlauthorlist}

\icmlaffiliation{nitk}{NITK Surathkal}
\icmlaffiliation{stanford}{Stanford University}
\icmlaffiliation{msr}{Microsoft Research India}

\icmlcorrespondingauthor{Prateek Jain}{prajain@microsoft.com}

% You may provide any keywords that you
% find helpful for describing your paper; these are used to populate
% the "keywords" metadata in the PDF but will not be shown in the document
\icmlkeywords{Machine Learning, ICML}

\vskip 0.3in
]

% this must go after the closing bracket ] following \twocolumn[ ...

% This command actually creates the footnote in the first column
% listing the affiliations and the copyright notice.
% The command takes one argument, which is text to display at the start of the footnote.
% The \icmlEqualContribution command is standard text for equal contribution.
% Remove it (just {}) if you do not need this facility.

\printAffiliationsAndNotice{\textsuperscript{*}Part of the work was done while interning at Microsoft Research India.}  % leave blank if no    need to mention equal contribution
% \printAffiliationsAndNotice{\icmlEqualContribution} % otherwise use the standard text.

\begin{abstract}
 Classical approaches for one-class problems such as one-class SVM and isolation forest require careful feature engineering when applied to structured domains like images. State-of-the-art methods aim to leverage deep learning to learn appropriate features via two main approaches. The first approach based on predicting transformations~\cite{golan2018,hendrycks2018deep} while successful in some domains, crucially depends on an appropriate domain-specific set of transformations that are hard to obtain in general. The second approach of minimizing a classical one-class loss on the \emph{learned} final layer representations, e.g., DeepSVDD \cite{deepsvdd} suffers from the fundamental drawback of representation collapse. In this work, we propose Deep Robust One Class Classification (DROCC) that is both applicable to most standard domains \emph{without} requiring any side-information and robust to representation collapse. DROCC is based on the assumption that the points from the class of interest lie on a well-sampled, locally linear low dimensional manifold. Empirical evaluation demonstrates that DROCC is highly effective in two different one-class problem settings and on a range of real-world datasets across different domains: tabular data, images (CIFAR and ImageNet), audio, and time-series, offering up to 20\% increase in accuracy over the state-of-the-art in anomaly detection. Code is available at \url{https://github.com/microsoft/EdgeML}.
\end{abstract}  

\section{Introduction}
\label{sec:intro}
% Point one: general introduction to one-class 
In this work, we study ``one-class" classification where the goal is to obtain accurate discriminators for a special class. Anomaly detection is one of the most well-known problems in this setting where we want to identify outliers, i.e. points that do not belong to the typical data (special class). Another related setting under this framework is classification from limited negative training instances where we require low false positive rate at test time even over close negatives. This is common in AI systems such as wake-word\footnote{audio or visual cue that triggers some action from the system} detection where the wake-words form the positive or special class, and for safe operation in the real world, the system should not fire on inputs that are close but not identical to the wake-words, no matter how the training data was sampled.
%% classification with limited negative data, but where we require low false positive rate (FPR) even under arbitrary negatives that are close to the positive class.

%% that is common in AI systems such as wake-word detection  is classification with low false positive rate (FPR) under arbitrary negative instances  

%% Similarly, one-class classification with limited negative data (\lfoc) is another problem that is common in AI systems such as wake-word detection in audio, where a low false positive rate (FPR) under arbitrary settings of the negative data is crucial. %In addition, crucially, since AI systems are typically constantly scanning for wake words in various situations, low false positive rate (FPR) under arbitrary settings is necessary to ensure safe operation of these systems and prevent unintended actions. We formally introduce and study this variant of one-class classification in Section~\ref{sec:fpr}.

Anomaly detection is a well-studied problem with a large body of research \cite{aggarwal16,chandola2009anomaly}. Classical approaches for anomaly detection are based on modeling the typical data using simple functions over the inputs \cite{oneclasssvm,isolationforest,lakhina2004}, such as constructing a minimum-enclosing ball around the typical data points \cite{tax2004support}. While these techniques are well-suited when the input is featurized appropriately, they struggle on complex domains like vision and speech, where hand-designing features is difficult. %in these domains. %significantly as simple functions over the inputs are ill-suited to model them.

%OC-SVM \cite{oneclasssvm} constructs the minimum-enclosing ball around typical data points, isolation forest \cite{isolationforest} uses axis-aligned cuts to isolate the data points and uses path length to identify outliers, PCA based approaches \cite{lakhina2004} use different low-dimensional subspaces to fit typical and anomalous data points. Such techniques are well-suited for settings where the input data is featurized appropriately. However, for more complex domains like vision and speech, these techniques struggle as it is difficult to hand-design features in these domains. %significantly as simple functions over the inputs are ill-suited to model them.
In contrast, deep learning based anomaly detection methods attempt to automatically \emph{learn} features, e.g., using CNNs in vision \cite{deepsvdd}. However, current approaches to do so have fundamental limitations. One family of approaches is based on extending the classical data modeling techniques over the learned representations. However, learning these representations jointly with the data modeling layer might lead to degenerate solutions where all the points are mapped to a single point (like origin), and the data modeling layer can now perfectly ``fit'' the typical data. Recent works like \cite{deepsvdd} have proposed some heuristics to mitigate this like setting the bias to zero, but such heuristics are often insufficient in practice (Table~\ref{tab:CIFAR}). The second line of work \citep{golan2018, bergman2020classificationbased, hendryks2019} is based on learning the salient geometric structure of the typical data (e.g., orientation of the object) by applying specific transformations (e.g., rotations and flips) to the input data and training the discriminator to predict applied transformation. If the discriminator fails to predict the transform accurately, the input does not have the same orientation as typical data and is considered anomalous. In order to be successful, these works critically rely on side-information in the form of appropriate structure/transformations, which is difficult to define in general, especially for domains like time-series, speech, etc. Even for images, if the normal data has been captured from multiple orientations, it is difficult to find appropriate transformations. The last set of deep anomaly detection techniques use generative models such as autoencoders or generative-adversarial networks (GANs) \cite{schlegl2017unsupervised} to learn to generate the entire typical data distribution which can be challenging and inaccurate in practice (Table~\ref{tab:CIFAR}). 

In this paper,  we propose a novel {\em Deep Robust One-Class Classifiation} (\drocc) method for anomaly detection that attempts to address the drawbacks of previous methods detailed above. \drocc is robust to representation collapse by involving a discriminative component that is general and is empirically accurate on most standard domains like tabular, time-series and vision without requiring any additional side-information. \drocc is motivated by the key observation that generally, the typical data lies on a low-dimensional manifold, which is well-sampled in the training data. This is believed to be true even in complex domains such as vision, speech, and natural language \cite{visionmanifold}. As manifolds resemble Euclidean space locally, our discriminative component is based on classifying a point as anomalous if it is \emph{outside} the union of small $\ell_2$ balls around the training typical points (See Figure~\ref{fig:synthetic_a} for an illustration). Importantly, the above definition allows us to synthetically generate anomalous points, and we adaptively generate the most effective anomalous points while training via a gradient ascent phase reminiscent of adversarial training. In other words, \drocc has a gradient ascent phase to adaptively add anomalous points to our training set and a gradient descent phase to minimize the classification loss by learning a representation and a classifier on top of the representations to separate typical points from the generated anomalous points. In this way, \drocc automatically learns an appropriate representation (like DeepSVDD)  but is robust to a representation collapse as mapping all points to the same value would lead to poor discrimination between normal points and the generated anomalous points. 
  
Next, we study a critical problem similar in flavor to anomaly detection and outlier exposure \cite{hendrycks2018deep}, which we refer to as One-class Classification with Limited Negatives (\lfoc). The goal of \lfoc is to design a one-class classifier for a {\em positive} class with only limited negative instances---the space of negatives is huge and is not well-sampled by the training points. The \lfoc classifier should have low FPR against \emph{arbitrary} distribution of negatives (or uninteresting class) while still ensuring accurate prediction accuracy for positives. For example, consider audio wake-word detection, where the goal is to identify a certain word, say \texttt{Marvin} in a given speech stream. For training, we collect negative instances where \texttt{Marvin} has not been uttered. Standard classification methods tend to identify simple patterns for classification, often relying only on some substring of \texttt{Marvin} say \texttt{Mar}. While such a classifier has good accuracy on the training set, in practice, it can have a high FPR as the classifier will mis-fire on utterances like \texttt{Marvelous} or \texttt{Martha}. This exact setting has been relatively less well-studied, and there is no benchmark to evaluate methods. Existing work suggests to simply expand the training data to include false positives found \emph{after} the model is deployed, which is expensive and oftentimes infeasible or unsafe in real applications. 

In contrast, we propose \drocclf, an outlier-exposure style extension of \drocc. Intuitively, \drocclf combines \drocc's anomaly detection loss (that is over only the positive data points) with standard classification loss over the negative data. But, in addition, \drocclf exploits the negative examples to learn a Mahalanobis distance to compare points over the manifold instead of using the standard Euclidean distance, which can be inaccurate for high-dimensional data with relatively fewer samples. %if certain input coordinates or features that are ``random'' and contain no discriminative information between the two classes. This allows us to use an \emph{elliptical} distance function locally that is more robust than the Euclidean distance function (which requires the manifold to be extremely well-sampled). 

We apply \drocc to standard benchmarks from multiple domains such as vision, audio, time-series, tabular data, and empirically observe that \drocc is indeed successful at modeling the positive (typical) class across all the above mentioned domains and can significantly outperform baselines. For example, when applied to the anomaly detection task on the benchmark CIFAR-10 dataset, our method can be up to 20\% more accurate than the baselines like DeepSVDD \cite{deepsvdd},  Autoencoder \cite{mayu2014}, and GAN based methods \cite{nguyen19b}. Similarly, for tabular data benchmarks like  Arrhythmia, \drocc can be $\ge 18$\% more accurate than state-of-the-art methods \cite{bergman2020classificationbased, zong2018deep}. Finally, for \lfoc problem, our method can be upto 10\% more accurate than standard baselines. 

In summary, the paper makes the following contributions: 
\begin{itemize}%[leftmargin=*]
\item We propose \drocc method that is based on a low-dimensional manifold assumption on the positive class using which it synthetically and adaptively generates negative instances to provide a general and robust approach to anomaly detection. 
\item We extend \drocc to a one-class classification problem where low FPR on arbitrary negatives is crucial. We also provide an experimental setup to evaluate different methods for this important but relatively less studied problem. 
\item Finally, we experiment with \drocc on a wide range of datasets across different domains--image, audio, time-series data and demonstrate the effectiveness of our method compared to baselines.
\end{itemize}

\section{Related Work}
\label{sec:rw}
Anomaly Detection (AD) has been extensively studied owing to its wide applicabilty \cite{chandola2009anomaly, goldstein2016a,aggarwal16}. Classical techniques use simple functions like modeling normal points via low-dimensional subspace or a tree-structured partition of the input space to detect anomalies \cite{oneclasssvm, tax2004support, isolationforest, lakhina2004, gu2019}. In contrast, deep AD methods attempt to learn appropriate features, while also learning how to model the typical data points using these features. They broadly fall into three categories discussed below. \\
 % presented an unsupervised support vector based algorithm, \cite{tax2004support} provided an alternative formulation which minimizes the volume of the hypersphere enclosing the training points. Isolation forest \cite{isolationforest} constructs multiple trees from the given data and the anomalous samples are hypothesized to be isolated close to the root. PCA based anomaly detection methods \cite{lakhina2004} assume that the typical points lie on a low-dimensional linear subspace and anomalies lie far from that subspace. Recent analysis by \citet{gu2019} has also demonstrated the effectiveness of nearest-neighbour methods for unsupervised anomaly detetction.
% isolation forest, one-class SVM, PCA based methods.

{\bf AD via generative modeling.} Deep Autoencoders as well as GAN based methods have been studied extensively \cite{malhotra2016lstm, mayu2014, nguyen19b, gan-ad,anogan}. However, these methods solve a harder problem as they require reconstructing the {\em entire} input from its low-dimensional representation during the decoding step. In contrast, \drocc directly addresses the goal of only identifying if a given point lies {\em somewhere} on the manifold, and hence tends to be more accurate in practice (see Table~\ref{tab:CIFAR}, \ref{tab:tabular}, \ref{tab:timeseries}). \\

{\bf Deep Once Class SVM}: Deep SVDD \cite{deepsvdd} introduced the first deep one-class classification objective for anomaly detection, but suffers from  representation collapse issue (see Section~\ref{sec:intro}). In contrast, DROCC is robust to such collapse since the training objective requires representations to allow for accurate discrimination between typical data points and their perturbations that are off the manifold of the typical data points.\\

{\bf Transformations based methods}: Recently, \cite{golan2018, hendryks2019} proposed another approach to AD based on self-supervision. The training procedure involves applying different transformations to the typical points and training a classifier to identify the transform applied. The key assumption is that a point is normal iff the transformations applied to the point can be correctly identified, i.e., normal points conform to a specific structure captured by the transformations.  \cite{golan2018,hendryks2019} applied the method to vision datasets and proposed using rotations, flips etc as the transformations. \cite{bergman2020classificationbased} generalized the method to tabular data by using handcrafted affine transforms. Naturally, the transformations required by these methods are heavily domain dependent and are hard to design for domains like time-series. Furthermore, even for vision tasks, the suitability of a transformation varies based on the structure of the typical points. For example, as discussed in \cite{golan2018},  horizontal flips perform well when the typical points are from class '3' (AUROC 0.957) of MNIST but perform poorly when typical points are from class '8' (AUROC 0.646). In contrast, the low-dimensional manifold assumption that motivates \drocc is generic and seems to hold across several domains like images, speech, etc. For example, DROCC obtains AUROC of $\sim$ 0.97 on both typical class '8' and typical class '3' in MNIST. (See Section~\ref{sec:exp} for more comparison with self-supervision based techniques) \\

{\bf Side-information based AD}: Recently, several AD  methods to explicitly incorporate side-information have been proposed. \cite{hendrycks2018deep} leverages access to a few out-of-distribution samples, \citep{ruff2020deep} explores the semisupervised setting where a small set of labeled anomalous examples are available. We view these approaches as complementary to \drocc which does not assume any side-information. Finally, \lfoc problem is generally modeled as a binary classification probelm, but outlier exposure (OE) style formulation \cite{hendryks2019} can be used to combine it with anomaly detection methods. Our method \drocclf builds upon OE approach but exploits the "outliers" in a more integrated manner. 

%Finally, \lfoc problem that we formalize in this work though similar to out-of-distribution (OOD) detection, is crucially different. A classifier that has good OOD performance is expected to classify an image from \emph{any} class as OOD if it is far from the training distribution. On the other hand, for good \lfoc performance, the classifier should classify OOD negatives as negative. The \lfoc metric is ambivalent to performance on OOD positive inputs, and only measures the performance on in distribution positives that are drawn from the test distribution matching the training distribution.  
%\ar{I made two changes above: removed Hendryck's citations for OOD since OOD probably dates to much earlier, and we should cite carefully. Access to OOD samples vs. not in LFOC does not seem a very important point, further not all OOD methods require access to OOD samples. }
\section{Anomaly Detection}
\label{sec:ad}
Let $\pset \subseteq \R^d$ denote the set of {\em typical}, i.e., non-anomalous data points. A point $x \in \R^d$ is \emph{anomalous} or {\em atypical} if $x \not \in \pset$. Suppose we are given $n$ samples $D=[x_i]_{i=1}^n \in \R^d$ as training data, where $D_S=\{x_i \mid x_i \in \pset\}$ is the set of typical points sampled in the training data and $|D_S|\geq (1-\nu)|S|$ i.e. $\nu\ll 1$  fraction of points in $D$ are anomalies. Then, the goal in \textit{unsupervised} anomaly detection (AD) is to learn a function $f_\theta: \R^d \mapsto \{-1, 1 \}$ such $f_\theta(x) = 1 $ when $x \in \pset$ and $f_\theta(x) = -1$ when $x \not \in \pset$. The anomaly detector is parameterized by some parameters $\theta$.

\begin{figure}
    \begin{minipage}{.49\textwidth}
        \begin{algorithm}[H]
            \algrenewcommand\algorithmicprocedure{}
            \caption{Training neural networks via \drocc} \label{alg:drocc-nn} \textbf{Input:} Training data $D=[x_1, x_2, \hdots x_n]$.  
            
            \textbf{Parameters:} Radius $r$, $\lambda\geq 0$, $\mu\geq 0$, step-size $\eta$, number of gradient steps $\ngrad$, number of initial training steps $\ninit$.
            
            \textbf{Initial steps:} For $B = 1, \hdots \ninit$
            
            \hspace*{\algorithmicindent} $X_B$: Batch of training inputs
            
            \hspace*{\algorithmicindent} $\theta = \theta - \text{Gradient-Step}\Big(\sum \limits_{x \in X_B} \ell(f_\theta(x), 1) \Big)$
            
            \textbf{DROCC steps:} For $B = \ninit, \hdots \ninit + N$ 
            
            \hspace*{\algorithmicindent} $X_B$: Batch of training inputs
            
            %% \hspace*{\algorithmicindent} \hspace*{\algorithmicindent} \textbf{Adversarial search:}
            
            \hspace*{\algorithmicindent} $\forall x \in X_B: h \sim \sN(0, I_{d})$
            %% ~~(Steps below in parallel across batch)

            \hspace*{\algorithmicindent} \textbf{Adversarial search:} For $i = 1, \hdots \ngrad$
            
            \hspace*{\algorithmicindent} \hspace*{\algorithmicindent} 1. $\ell(h) = \ell(f_\theta(x + h), -1)$
            %% Loss is high if $x+h$ is classified as positive
            
            \hspace*{\algorithmicindent} \hspace*{\algorithmicindent} 2. $h = h + \eta \frac{\nabla_h \ell(h)}{\| \nabla_h \ell(h) \|}$
            %% Find point that has highest score of being positive
            
            \hspace*{\algorithmicindent} \hspace*{\algorithmicindent} 3. $h = \frac{\alpha}{\|h\|}\cdot h$ where $\alpha= r\cdot \mathds{1}[\|h\|\leq r]+ \|h\|\cdot \mathds{1}[r\leq \hspace*{60pt} \|h\|\leq \gamma \cdot r] + \gamma \cdot r\cdot \mathds{1}[\|h\|\geq \gamma \cdot r]$%% ~~Normalize gradient
            
            \hspace*{\algorithmicindent} $\ell^{itr} =  \lambda\| \theta \|^2 + \sum \limits_{x \in X_B} \ell(f_\theta(x), 1) +\mu  \ell(f_\theta(x + h), -1)  $

            \hspace*{\algorithmicindent} $\theta = \theta - \text{Gradient-Step}(\ell^{itr})$  
        \end{algorithm}
    \end{minipage}\hspace*{5pt}
    \begin{minipage}{.49\textwidth}

    \end{minipage}
\end{figure}

\begin{figure*}[t]%
    \centering
    %\subfloat[]{{\includegraphics[width=3cm]{}}}\qquad
    \subfloat[\label{fig:synthetic_a}]{{\includegraphics[width=3.4cm]{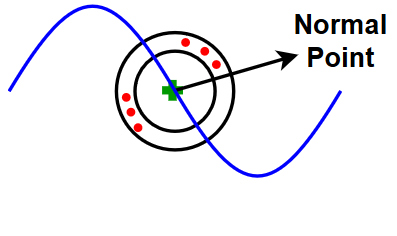}}}\qquad
    \subfloat[\label{fig:synthetic_b}]{{\includegraphics[width=3.4cm]{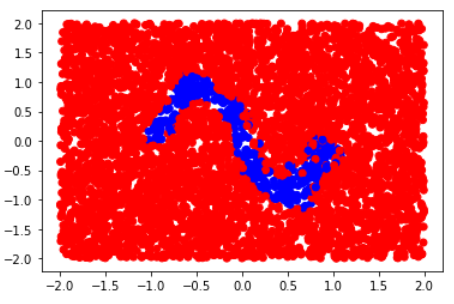}}}\qquad
    \subfloat[\label{fig:synthetic_c}]{{\includegraphics[width=3.4cm]{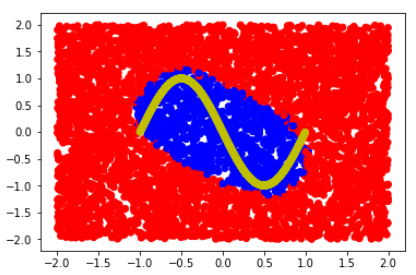}}}\qquad
    \subfloat[\label{fig:synthetic_d}]{{\includegraphics[width=3.1cm]{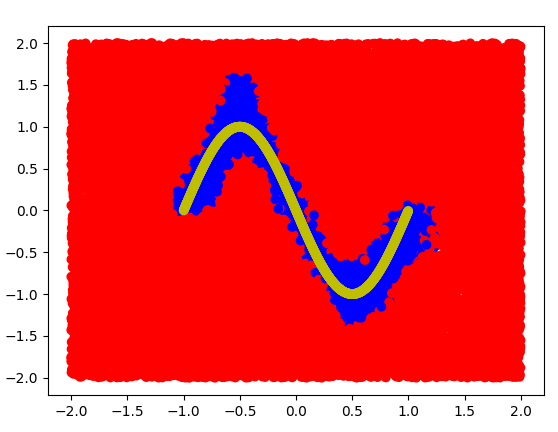}}}
    \caption{(a) A {\em normal} data manifold with red dots representing generated anomalous points in $\negset{i}(r)$. (b) Decision boundary learned by \drocc when applied to the data from (a). Blue represents points classified as normal and red points are classified as abnormal. (c), (d):  first two dimensions of the decision boundary of \drocc  and \drocclf, when applied to noisy  data (Section~\ref{sec:exp_fpr}). \drocclf is nearly optimal while \drocc's decision boundary is inaccurate. Yellow color sine wave depicts the  train data.}
\end{figure*}
{\bf Deep Robust One Class Classification}: 
\label{sec:drocc_formulation}
We now present our approach to unsupervised anomaly detection that we call Deep Robust One Class Classification (\drocc). Our approach is based on the following hypothesis:
\emph{The set of typical points $\pset$ lies on a low dimensional locally linear manifold that is well-sampled}. In other words, outside a small radius around a training (typical) point, most points are anomalous. Furthermore, as manifolds are locally Euclidean, we can use the standard $\ell_2$ distance function to compare the points in a small neighborhood. Figure \ref{fig:synthetic_a} shows a $1$-d manifold of the typical points and intuitively, why in a small neighborhood of the training points we can use $\ell_2$ distances. We label the typical points as positive and anomalous points as negative. 
  
Formally, for a DNN architecture $f_\theta:\R^d \rightarrow \R$ parameterized by $\theta$, and a small radius $r>0$, $\drocc$ estimates parameter $\theta^{dr}$ as : $\min_\theta \ell^{dr}(\theta)$, where,
%For some small radius $r$, and classification loss $\ell$, \drocc defines the parameter $\thetadrocc$ as the optimal solution to the following problem: %\footnote{We omit slack variables for clarity. For robustness to outliers in the training data, it is customary to have slack variables such that the constraints need not be satisfied by all points.} 

\begin{align}
    &\elldrocc(\theta) = \lambda\| \theta \|^2 +  \sum\limits_{i=1}^n [\ell(f_\theta(x_i), 1) + \mu \max \limits_{\stackrel{\tilde{x}_i\in}{ \negset{i}(r)}} \ell(f_\theta(\tilde{x}_i), -1)],\ \nonumber\\
    &\negset{i}(r)\eqdef \Big\{\| \tilde{x}_i - x_i \|_2 \leq \gamma \cdot r;\ \ r\leq \| \tilde{x}_i - x_j \|, \
    \nonumber\\
    &\qquad\qquad~\forall j = 1, 2, \hdots n\Big\}, \label{eq:drocc-loss}
\end{align}

and $\lambda>0$, $\mu>0$ are regularization parameters. $\negset{i}(r)$ captures points off the manifold, i.e., are at least at $r$ distance from all training points. We use an upper bound $\gamma \cdot r$ for regularizing the optimization problem where $\gamma\geq 1$. $\ell:\mathbb{R}\times \mathbb{R} \rightarrow \mathbb{R}$ is some classification loss function, and goal is to classify the given normal points $x_i$'s as positives while the generated anomalous examples $\tilde{x}_i$ as negatives. 

The above given formulation is a saddle point problem and is similar to adversarial training where the network is trained to be robust to worst-case $\ell_p$ ball perturbations around the inputs (See, for example \citep{madry2018towards}). In \drocc, we replace the $\ell_p$ ball with $\negset{i}(r)$, and adopt the standard projected gradient descent-ascent technique to solve the saddle point problem.

\paragraph{Gradient-ascent to generate negatives.} A key step in the gradient descent-ascent algorithm is that of projection onto the $\negset{i}(r)$ set. That is, given $z\in \R^d$, the goal is to find $\tilde{x}_i = \arg\min_u \|u-z\|^2\ s.t.\ u\in \negset{i}(r)$. Now, $\negset{i}$ contains points that are less than $\gamma \cdot r$ distance away from $x_i$ and at least $r$ away from all $x_j$'s. The second constraint involves all the training points and is computationally challenging. So, for computational ease, we redefine $\negset{i}(r)\eqdef \Big\{r\leq \| \tilde{x}_i - x_i \|_2 \leq \gamma \cdot r\Big\}$. In practice, since the positive  points in $\pset$ lie on a low dimensional manifold, we empirically find that the adversarial search over this set does not yield a point that is in $\pset$. Further, we use a lower weight on the classification loss of the generated negatives so as to guard against possible non-anomalous points in $\negset{i}(r)$. Finally, projection onto this set is given by $\tilde{x}_i= x_i+\alpha\cdot (z-x_i)$ where $\beta=\|z-x_i\|$, and $\alpha=\gamma r/\beta$ if $\beta\geq \gamma r$ (point is too far), $\alpha=r/\beta$ if $\beta\leq r$ and $\alpha=1$ otherwise. 

Algorithm~\ref{alg:drocc-nn} summarizes our \drocc method. The three steps in the adversarial search are performed in parallel for each $x\in B$ the batch; for simplicity, we present the procedure for a single example $x$. In step one, we compute the loss of the network with respect to a negative label (anomalous point) where we express $\tilde{x}$ as $x + h$. In step two, we maximize this loss in order to find the most ``adversarial'' point via normalized steepest ascent. Finally, we project $\tilde{x}$ onto $\negset{i}(r)$. %Several works on adversarial robustness demonstrate the success of normalized steepest ascent; we use the same for generating negative examples. 
In order to update the parameters of the network, we could use any gradient based update rule such as SGD or other adaptive methods like Adagrad or Adam. We typically set $\gamma=2$. Parameters $\lambda, \mu, \eta$ are selected via cross-validation. Note that our method allows arbitrary DNN architecture $f_\theta$ to represent and classify data points $x_i$. Finally, we set $\ell$ to be the standard cross-entropy loss. 

\section{One-class Classification with Limited Negatives (\lfoc)}
\label{sec:fpr}
In this section, we extend \drocc to address the \lfoc problem. Let $D=[(x_1, y_1), \dots, (x_n,y_n)]$ be a given set of points where $x_i\in \R^d$ and $y_i \in \{1,-1\}$. Furthermore, let the mass of positive points' distribution covered by the training data is significantly higher than that of negative points' distribution. For example, if data points are sampled from a discrete distribution, with $P_+$ being the marginal distribution of the positive points and $P_-$ be the margin distribution of the negative points. Then, the assumption is: $\frac{1}{n_-}\sum_{i, y_i=-1} P_-(x_i) \ll \frac{1}{n_+}\sum_{i, y_i=1} P_+(x_i)$ where $n_+$, $n_-$ are the number of positive and negative training points. 

The goal of \lfoc is similar to anomaly detection (AD), that is, to identify arbitrary outliers--negative class in this case--correctly despite limited access to negatives' data distribution.  So it is an AD problem with side-information in the form of limited negatives. Intuitively, \lfoc problems arise in domains where data for special positive class (or set of classes) can be collected thoroughly, but the ``negative" class is a catch-all class that cannot be sampled thoroughly due to its sheer size. Several real-world problems can be naturally modeled by \lfoc. For example, consider wake word detection problems where the goal is to identify a special audio command to wake up the system. Here, the data for a special wake word can be collected exhaustively, but the negative class, which is ``everything else" cannot be sampled properly. %Similarly, in spam detection problems, we can collect "non-spam" data reasonably well, but the "spam" data cannot be sampled thoroughly because spammers might keep on changing its distribution. 

Naturally, we can directly apply standard AD methods (e.g., \drocc) or binary classification methods to the problem. However, AD methods ignore the side-information, while the classification methods' might generalize only to the training distribution of negatives and hence might have high False Positive Rate (FPR) in the presence of negatives far from the train distribution. Instead, we propose method \droccoe that uses an approach similar to outlier exposure \cite{hendrycks2018deep}, where the optimization function is given by a summation of the anomaly detection loss and standard cross entropy loss over negatives. The intuition behind \droccoe is that the positive data satisfies the manifold hypothesis of the previous section, and hence points off the manifold should be classified as negatives. But the process can be bootstrapped by explicit negatives from the training data. 

Next, we propose \drocclf which integrates information from the negatives in a deeper manner than \droccoe. In particular, we use negatives to learn input coordinates or features which are noisy and should be ignored. As \drocc uses Euclidean distance to compare points locally, it might struggle due to the noisy coordinates, which \drocclf will be able to ignore. Formally, \drocclf estimates parameter $\thetalf$ as: $\min_\theta \ell^{lf}(\theta)$ where, 
{
    \begin{align}
    &\ell^{lf}(\theta) = \lambda\| \theta \|^2 +  \sum\limits_{i=1}^n [\ell(f_\theta(x_i), y_i) + \mu \max \limits_{\stackrel{\tilde{x}_i\in}{ \negset{i}(r)}} \ell(f_\theta(\tilde{x}_i), -1)],\ 
    \nonumber \\
    &\qquad\qquad\negset{i}(r):=\{\tilde{x}_i,\ s.t., \ r\leq \|\tilde{x}_i-x_i\|_\Sigma \leq \gamma\cdot r\},\label{eq:drocclf-loss}
    \end{align}
}
and $\lambda>0$, $\mu>0$ are regularization parameters. Instead of Euclidean distance, we use Mahalanobis distance function $\|\tilde{x}-x\|_\Sigma^2=\sum_j \sigma_j (\tilde{x}^j-x^j)^2$ where $x^j, \tilde{x}^j$ are the $j$-th coordinate of $x$ and $\tilde{x}$, respectively.  $\sigma_j:= \left|\frac{\partial f_{\theta(x)}}{\partial x_j}\right|$, i.e., $\sigma_j$ measures the "influence" of $j$-th coordinate on the output, and is updated every epoch during training.

Similar to \eqref{eq:drocc-loss}, we can use the standard projected gradient descent-ascent algorithm to optimize the above given saddle point problem. Here again, projection onto $\negset{i}(r)$ is the key step. That is, the goal is to find: $\tilde{x}_i = \arg\min_x \|x-z\|^2\ s.t.\ x\in \negset{i}(r)$. Unlike, Section~\ref{sec:ad} and Algorithm~\ref{alg:drocc-nn},  the above projection is unlikely to be available in closed form and requires more careful arguments. 
\begin{proposition}\label{prop:lfoc}
Consider the problem: $\min_{\tilde{x}} \|\tilde{x}-z\|^2,\ s.t.,\  r^2\leq \|\tilde{x}-x\|_\Sigma^2 \leq \gamma^2 r^2 $ and let $\delta=z-x$. If $r\leq \|\delta\|_\Sigma \leq \gamma r$, then $\tilde{x}=z$. Otherwise, the optimal solution is : $\tilde{x}=x+(I+\tau \Sigma)^{-1}\delta,$ where : \\
1) If $\|\delta\|_\Sigma\leq r$, \\
$\tau :=\arg\min_{\tau\leq 0} \sum_j \frac{\delta_j^2 \tau^2 \sigma_j^2}{(1+\tau \sigma_j)^2},\ s.t.,\ \sum_j \frac{\delta_j^2\sigma_j}{(1+\tau \sigma_j)^2}\geq r^2,$  \\
2) If   $\|\delta\|_\Sigma\geq \gamma\cdot r$, \\
$\tau^{-1}:=\arg\min_{\nu\geq 0} \sum_j \frac{\delta_j^2 \sigma_j^2}{(\nu+ \sigma_j)^2},\ s.t.,\ \sum_j \frac{\delta_j^2\sigma_j\nu^2}{(\nu+ \sigma_j)^2}\leq \gamma^2 r^2$ .
\end{proposition}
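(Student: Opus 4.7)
The plan is to reduce the projection to a scalar one-dimensional problem via KKT, exploiting the diagonal structure of $\Sigma$. I first translate variables to $u=\tilde{x}-x$ and $\delta=z-x$, so the problem reads $\min_u \|u-\delta\|^2$ subject to $r^2\leq \|u\|_\Sigma^2\leq \gamma^2 r^2$. The trivial case $r\leq \|\delta\|_\Sigma\leq \gamma r$ is immediate: $u=\delta$ is feasible with objective zero, which gives $\tilde{x}=z$.

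Otherwise exactly one constraint is active, since $r<\gamma r$ precludes both ellipsoidal boundaries from being attained by the same point. Introducing Lagrange multipliers $\lambda_1,\lambda_2\geq 0$ for the two inequalities and writing $\tau:=\lambda_2-\lambda_1$, the stationarity condition reduces to $(I+\tau\Sigma)u=\delta$, whence $u=(I+\tau\Sigma)^{-1}\delta$ and $\tilde{x}=x+(I+\tau\Sigma)^{-1}\delta$. Because $\Sigma$ is diagonal this decouples coordinate-wise into $u_j=\delta_j/(1+\tau\sigma_j)$ and $u_j-\delta_j=-\tau\sigma_j\delta_j/(1+\tau\sigma_j)$, which already packages the building blocks of both scalar formulations in the proposition.

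To identify $\tau$ I split into the two cases. When $\|\delta\|_\Sigma\leq r$, complementary slackness forces $\lambda_2=0$ and hence $\tau=-\lambda_1\leq 0$; substituting the coordinate-wise expressions yields both the scalar objective $\sum_j \delta_j^2\tau^2\sigma_j^2/(1+\tau\sigma_j)^2$ and the activation identity $\sum_j \delta_j^2\sigma_j/(1+\tau\sigma_j)^2=r^2$. The critical observation here is that the sum on the left of this identity is strictly decreasing in $\tau$ on the domain $\tau>-1/\max_j\sigma_j$ (where $I+\tau\Sigma\succ 0$), so requiring it to be $\geq r^2$ together with $\tau\leq 0$ pins down a unique value at which equality holds, and minimizing the objective over this feasible set is equivalent to solving the KKT equation. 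The case $\|\delta\|_\Sigma\geq \gamma r$ is symmetric: only the upper constraint binds so $\tau=\lambda_2\geq 0$, and the substitution $\nu=1/\tau$ avoids the blow-up as $\tau\to\infty$ and transforms the coordinate-wise expressions into the stated forms $\sum_j \delta_j^2\sigma_j^2/(\nu+\sigma_j)^2$ for the objective and $\sum_j \delta_j^2\sigma_j\nu^2/(\nu+\sigma_j)^2\leq \gamma^2 r^2$ for the constraint.

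The main obstacle I anticipate is that the annular feasible region is nonconvex, so KKT conditions are only necessary and there could in principle be multiple stationary points. This is resolved by the monotonicity argument above: when $\delta$ lies strictly inside the inner ellipsoid, any feasible candidate can be moved closer to $\delta$ by contracting along the ellipsoidal normal until it meets $\|u\|_\Sigma^2=r^2$, so the optimum must lie on that inner boundary; projection onto this fixed ellipsoid then has a unique minimizer characterized by the scalar equation in the proposition. The mirror argument handles the outer ellipsoid, yielding in both cases the closed form $\tilde{x}=x+(I+\tau\Sigma)^{-1}\delta$ with the $\tau$ (or $\nu$) specified by the corresponding scalar program.
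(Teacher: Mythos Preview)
Your argument is correct and follows essentially the same Lagrangian/KKT route as the paper: identify the active boundary, write the stationarity condition $(I+\tau\Sigma)u=\delta$, and use the diagonal structure of $\Sigma$ to reduce to a scalar problem in $\tau$ (or $\nu$). You are actually more careful than the paper in explicitly confronting the nonconvexity of the annular region and supplying the monotonicity argument that pins down a unique $\tau$, whereas the paper simply invokes KKT necessity and dual feasibility without discussing sufficiency.
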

See Appendix~\ref{app:lfoc} for a detailed proof. The above proposition reduces the projection problem to a non-convex but one-dimensional optimization problem. We solve this problem via standard grid search over: $\tau=[-\frac{1}{\max_j \sigma_j}, 0]$ or $\nu=[0,\frac{\alpha}{1-\alpha}\max_j \sigma_j ]$ where $\alpha=\gamma\cdot r/\|\delta\|_\Sigma$. The algorithm is now almost same as Algorithm~\ref{alg:drocc-nn} but uses the
above mentioned projection algorithm; see Appendix~\ref{app:lfoc} for a pseudo-code of our \drocclf method.

\subsection{\lfoc Evaluation Setup}
Due to lack of benchmarks, it is difficult to evaluate a solution for \lfoc. So, we provide a novel experimental setup for a wake-word detection and a digit classification problem, showing that \drocclf indeed significantly outperforms standard anomaly detection, binary classification, and \droccoe on practically relevant metrics (Section~\ref{sec:exp_fpr}). 

In particular, our setup is inspired by standard settings encountered by real-world detection problems. For example, consider the wakeword detection problem, where the goal is to detect a wakeword like say "Marvin" in a continuous stream of data. In this setting, we are provided a few positive examples for {\em Marvin} and a few generic negative examples from everyday speech. But, in our experiment setup, we generate {\em close} or difficult negatives by generating examples like {\em Arvin}, {\em Marvelous}  etc. Now, in most real-world deployments, a critical requirement is low False Positive Rates, even on such difficult negatives. So, we study various methods with FPR bounded by say 3\% or 5\% on negative data that comprises of generic negatives as well as difficult {\em close} negatives. Now, under FPR constraint, we evaluate various methods by their recall rate, i.e., based on how many true positives the method is able to identify. We propose a similar setup for a digit classification problem as well; see Section~\ref{sec:exp_fpr} for more details. 

\section{Empirical Evaluation}
\label{sec:exp}
In this section, we present empirical evaluation of \drocc on two one-class classification problems: Anomaly Detection and One-Class Classification with Limited Negatives (\lfoc). We discuss the experimental setup, datasets, baselines, and the implementation details. Through experimental results on a wide range of synthetic and real-world datasets, we present strong empirical evidence for the effectiveness of our approach for one-class classification problems.
\addtolength{\dbltextfloatsep}{-0.06in}
\addtolength{\textfloatsep}{-0.06in}
\begin{table*}[t]
    \centering
    \caption{Average AUC (with standard deviation) for one-vs-all anomaly detection on CIFAR-10. \drocc outperforms baselines on most classes, with gains as high at 20\%, and notably, nearest neighbours beats all the baselines on 2 classes.}
    \resizebox{\textwidth}{!}{
        \begin{tabular}{llllllllll}
            \hline
            \textbf{CIFAR Class} & \textbf{OC-SVM} & \textbf{IF} & \textbf{DCAE} & \textbf{AnoGAN} & 
            \textbf{\begin{tabular}[c]{@{}l@{}}ConAD\\ 16\end{tabular}} & \textbf{\begin{tabular}[c]{@{}l@{}}Soft-Bound\\ Deep SVDD\end{tabular}} & \textbf{\begin{tabular}[c]{@{}l@{}}One-Class \\ Deep SVDD\end{tabular}} & \textbf{\begin{tabular}[c]{@{}l@{}}Nearest\\ Neighbour\end{tabular}} & \textbf{\drocc (Ours)}  \\ \hline
            Airplane             & 61.6$\pm$0.9     & 60.1$\pm$0.7    & 59\_1$\pm$5\_1    & 67.1$\pm$2.5        & 77.2 & 61.7$\pm$4.2                                                                & 61.7$\pm$4.1                                                                & 69.02                      & \textbf{81.66 $\pm$ 0.22}  \\
            Automobile           & 63.8$\pm$0.6      & 50.8$\pm$0.6    & 57.4$\pm$2.9      & 54.7$\pm$3.4        & 63.1 & 64.8$\pm$1.4                                                                & 65.9$\pm$2.1                                                                & 44.2                       & \textbf{76.738 $\pm$ 0.99} \\
            Bird                 & 50.0$\pm$0.5      & 49.2$\pm$0.4    & 48.9$\pm$2.4      & 52.9$\pm$3.0        & 63.1 & 49.5$\pm$1.4                                                                & 50.8$\pm$0.8                                                                & \textbf{68.27}             & 66.664 $\pm$ 0.96          \\
            Cat                  & 55.9$\pm$1.3      & 55.1$\pm$0.4    & 58.4$\pm$1.2      & 54.5$\pm$1.9        & 61.5 & 56.0$\pm$1.1                                                                & 59.1$\pm$1.4                                                                & 51.32                      & \textbf{67.132 $\pm$ 1.51} \\
            Deer                 & 66.0$\pm$0.7      & 49.8$\pm$0.4    & 54.0$\pm$1.3      & 65.1$\pm$3.2        & 63.3 & 59.1$\pm$1.1                                                                & 60.9$\pm$1.1                                                                & \textbf{76.71}             & 73.624 $\pm$ 2.00          \\
            Dog                  & 62.4$\pm$0.8      & 58.5$\pm$0.4    & 62.2$\pm$1.8      & 60.3$\pm$2.6        & 58.8 & 62.1$\pm$2.4                                                                & 65.7$\pm$2.5                                                                & 49.97                      & \textbf{74.434 $\pm$ 1.95} \\
            Frog                 & 74.7$\pm$0.3      & 42.9$\pm$0.6    & 51.2$\pm$5.2      & 58.5$\pm$1.4        & 69.1 & 67.8$\pm$2.4                                                                & 67.7$\pm$2.6                                                                & 72.44                      & \textbf{74.426 $\pm$ 0.92} \\
            Horse                & 62.6$\pm$0.6      & 55.1$\pm$0.7    & 58.6$\pm$2.9      & 62.5$\pm$0.8        & 64.0 & 65.2$\pm$1.0                                                                     & 67.3$\pm$0.9                                                                & 51.13                      & \textbf{71.39 $\pm$ 0.22}  \\
            Ship                 & 74.9$\pm$0.4      & 74.2$\pm$0.6    & 76.8$\pm$1.4      & 75.8$\pm$4.1        & 75.5 & 75.611.7                                                                & 75.9$\pm$1.2                                                                & 69.09                      & \textbf{80.016 $\pm$ 1.69} \\
            Truck                & 75.9$\pm$0.3      & 58.9$\pm$0.7    & 67.3$\pm$3.0      & 66.5$\pm$2.8        & 63.7 & 71.0$\pm$1.1                                                                & 73.1$\pm$1.2                                                                & 43.33                      & \textbf{76.21 $\pm$ 0.67} \\ \hline
    \end{tabular}}
    \label{tab:CIFAR}
\end{table*}
\subsection{Anomaly Detection}
\textbf{Datasets}: In all the experiments with multi-class datasets, we follow the standard one-vs-all setting for anomaly detection: fixing each class once as nominal and treating rest as anomaly. The model is trained only on the nominal class but the test data is sampled from all the classes. For timeseries datasets,  \textit{N} represents the number of time-steps/frames and \textit{d} represents the input feature length. 

We perform experiments on the following datasets:%\vspace*{-10pt}
\begin{itemize}[leftmargin=*]
    \itemsep 0pt
    \topsep 0pt
    \parskip 0pt
    \item 2-D sine-wave: 1000 points sampled uniformly from a 2-dimensional sine wave (see Figure~\ref{fig:synthetic_a}).
    \item Abalone \cite{ucirepo}: Physical measurements of abalone are provided and the task is to predict the age. Classes 3 and 21 are anomalies and classes 8, 9, and 10 are normal \cite{das2018active}.
    \item Arrthythmia \cite{odds}: Features derived from ECG and the task is to identify arrhythmic samples. Dimensionality is 279 but five categorical attributes are discarded. Dataset preparation is similar to \citet{zong2018deep}.
    \item Thyroid \cite{odds}: Determine whether a patient referred to the clinic is
hypothyroid based on patient's medical data. Only 6 continuous attributes are considered. Dataset preparation is same as \citet{zong2018deep}.
    \item Epileptic Seizure Recognition \cite{andrzejak2001indications}: EEG based time-series dataset from multiple patients. Task is to identify if EEG is abnormal (\textit{N} = 178, \textit{d} = 1).
%     \item HAR-2 \cite{anguita2012human}: accelerometer and gyroscope readings from a smartphone for the task of human activity recognition (\textit{N} = 128, \textit{d} = 9).
    \item Audio Commands \cite{warden2018speech}: A multiclass data with 35 classes of audio keywords. Data is featurized using MFCC features with 32 filter banks over 25ms length windows with stride of 10ms ($N=98$, $d=32$). Dataset preparation is same as \citet{fastgrnn}.    
    \item CIFAR-10 \cite{krizhevsky09learningmultiple}: Widely used benchmark for anomaly detection, 10 classes with $32 \times 32$ images. % we use the procedure mentioned above to convert this multi-class problem into anomaly detection dataset. 
    \item ImageNet-10: a subset of 10 randomly chosen classes from the ImageNet dataset \cite{deng2009imagenet} which contains $224\times 224$ color images.

\end{itemize}
The datasets which we use are all publicly available. We use the train-test splits when already available with a 80-20 split for train and validation set. In all other cases, we use random 60-20-20 split for train, validation, and test.
% \textbf{Metric}: To evaluate our method and benchmark it against prior work, we use the standard area under the ROC curve (AUC) as the metric for all the AD experiments, which is given by the fraction of pairs of normal and abnormal points where the learned scoring function ranks normal points above the abnormal points. 

% \textbf{Implementation}:
% We use the standard nearest neighbours, autoencoder, and DeepSVDD as baselines. The number of nearest neighbors $k$ is tuned over the validation set. 
% Motivated by recent analysis \cite{gu2019}, we include nearest neighbours as one of the baselines, despite its omission in prior works. The anomaly score for each test point is defined as the mean distance of the test point from the nearest $k$-training points. The value of $k$ is tuned for each dataset and the highest AUC values are reported. In addition to nearest neighbours, we use autoencoders and DeepSVDD as baselines. 
% The architecture for the encoder of the autoencoder, the network in DeepSVDD and the network for \drocc is kept same to maintain consistency in results. The decoder of the autoencoder is constructed by reversing the architecture of the encoder. The encoding dimension for the autoencoder and DeepSVDD (dimension of the representation to which one-class loss is applied) is tuned for each dataset. For each of the method, we also tune the weight decay parameter $5\cdot \{10^{-4},10^{-5},10^{-7}\}$. 
\textbf{\drocc Implementation:} The main hyper-parameter of our algorithm is the radius $r$ which defines the set $\negset{i}(r)$.
We observe that tweaking radius value around $\sqrt{d}/2$ (where \textit{d} is the dimension of the input data ) works the best, as due to zero-mean, unit-variance normalized features, the average distance between random points is $\approx\sqrt{d}$. We fix $\gamma$ as 2 in our experiments unless specified otherwise. Parameter $\mu$ \eqref{eq:drocc-loss}  is chosen from \{0.5 , 1.0\}. We use a standard step size from \{0.1, 0.01\} for gradient ascent and from $\{10^{-2}, 10^{-4}\}$ for gradient descent; we also tune the optimizer $\in$ \{Adam, SGD\}. See Appendix~\ref{app:ablation} for a detailed ablation study. The implementation is available as part of the EdgeML package~\cite{edgeml03}. The experiments were run on an Intel Xeon CPU with 12 cores clocked at 2.60 GHz and with NVIDIA Tesla P40 GPU, CUDA 10.2, and cuDNN 7.6. 

\subsubsection{Results}
\label{sec:exp_sine}
\textbf{Synthetic Data}: 
We present results on a simple 2-D sine wave dataset to visualize the kind of classifiers learnt by \drocc. Here, the positive data lies on a 1-D manifold given in Figure~\ref{fig:synthetic_a}. We observe from Figure~\ref{fig:synthetic_b} that \drocc is able to capture the manifold accurately; whereas the classical methods OC-SVM and DeepSVDD (shown in Appendix~\ref{app:model-vis}) perform poorly as they both try to learn a minimum enclosing ball for the \textit{whole} set of positive data points.

\begin{table}[t]
\centering
\caption{F1-Score (with standard deviation) for one-vs-all anomaly detection on Thyroid, Arrhythmia, and Abalone datasets. \drocc outperforms the baselines on all the three datasets.}
\resizebox{\columnwidth}{!}{
\begin{tabular}{llll}
\hline
\multicolumn{1}{c}{\textbf{}} & \multicolumn{3}{c}{\textbf{F1-Score}}                                                   \\ \cline{2-4} 
\textbf{Method}               & \textbf{Thyroid}            & \textbf{Arrhythmia}         & \textbf{Abalone}            \\ \hline
OC-SVM    \cite{oneclasssvm}                    & 0.39 $\pm$ 0.01          & 0.46 $\pm$ 0.00         & 0.48 $\pm$ 0.00         \\
DCN\cite{caron2018deep}                           & 0.33 $\pm$ 0.03          & 0.38 $\pm$ 0.03          & 0.40 $\pm$ 0.01         \\
E2E-AE  \cite{zong2018deep}                      & 0.13 $\pm$ 0.04          & 0.45 $\pm$ 0.03          & 0.33 $\pm$ 0.03          \\
LOF     \cite{breunig2000lof}                      & 0.54 $\pm$ 0.01          & 0.51 $\pm$ 0.01          & 0.33 $\pm$ 0.01          \\
DAGMM \cite{zong2018deep}                        & 0.49 $\pm$ 0.04          & 0.49 $\pm$ 0.03          & 0.20 $\pm$ 0.03          \\
DeepSVDD \cite{deepsvdd}           & 0.73 $\pm$ 0.00          & 0.54 $\pm$ 0.01          & 0.62 $\pm$ 0.01          \\
GOAD \cite{bergman2020classificationbased}           & 0.72 $\pm$ 0.01          & 0.51 $\pm$ 0.02          & 0.61 $\pm$ 0.02          \\
\textbf{DROCC (Ours)}       & \textbf{0.78 $\pm$ 0.03} & \textbf{0.69 $\pm$ 0.02} & \textbf{0.68 $\pm$ 0.02} \\ \hline
\end{tabular}
}
\label{tab:tabular}
\vspace{5pt}
\end{table}

\textbf{Tabular Data}:
Table~\ref{tab:tabular} compares \drocc against various classical algorithms: OC-SVM, LOF\cite{breunig2000lof} as well as deep baselines: DCN\cite{caron2018deep}, Autoencoder \cite{zong2018deep}, DAGMM\cite{zong2018deep}, DeepSVDD and GOAD\cite{bergman2020classificationbased} on the widely used benchmark datasets, Arrhythmia, Thyroid and Abalone. In line with prior work, we use the F1- Score for comparing the methods \cite{bergman2020classificationbased, zong2018deep}. A fully-connected network with a single hidden layer is used as the base network for all the deep baselines. We observe significant gains across all the three datasets for \drocc, as high as 18\% in Arrhythmia.

\begin{table}[t]
\centering
\caption{AUC (with standard deviation) for one-vs-all anomaly detection on Epileptic Seizures and Audio Keyword ``Marvin". \drocc outperforms the baselines on both the datasets}
\resizebox{\columnwidth}{!}{
\begin{tabular}{lll}
    \hline
    \multicolumn{1}{c}{\multirow{2}{*}{\textbf{Method}}} & \multicolumn{2}{c}{\textbf{AUC}}                                       \\ \cline{2-3} 
    \multicolumn{1}{c}{}                                 & \textbf{Epileptic Seizure}   & \textbf{Audio Keywords} \\ \hline
    kNN                                    & 91.75       & 65.81                   \\
    AE                                  \cite{mayu2014}   & 91.15 $\pm$ 1.7            & 51.49 $\pm$ 1.9         \\
    REBM                     \cite{rebm}                            & 97.24 $\pm$ 2.1           & 63.73 $\pm$ 2.4         \\
    DeepSVDD  \cite{deepsvdd}                                            & 94.84 $\pm$ 1.7           & 68.38 $\pm$ 1.8         \\
    \textbf{DROCC (Ours)}                                                & \textbf{98.23 $\pm$ 0.7}            &  \textbf{70.21 $\pm$ 1.1}    \\ \hline
\end{tabular}
}
\label{tab:timeseries}
\vspace{10pt}
\end{table}

\textbf{Time-Series Data}: There is a lack of work on anomaly detection for time-series datasets. Hence we extensively evaluate our method \drocc against deep baselines like AutoEncoders \cite{mayu2014}, REBM \cite{rebm} and DeepSVDD. For autoencoders, we use the architecture presented in \citet{srivastava2015unsupervised}.
A single layer LSTM is used for all the deep baselines. Motivated by recent analysis \cite{gu2019}, we also include nearest neighbours as a baseline. Table~\ref{tab:timeseries} compares the performance of \drocc against these baselines on the univariate Epileptic Seizure dataset, and the Audio Commands dataset. \drocc outperforms the baselines on both the datasets.

\textbf{Image Data}: 
For experiments on image datasets, we fixed $\gamma$ as 1. Table~\ref{tab:CIFAR} compares \drocc on CIFAR-10 against baseline numbers from OC-SVM \cite{oneclasssvm}, IF \cite{isolationforest}, DCAE \cite{seebock2016identifying}, AnoGAN \cite{anogan}, DeepSVDD as reported by \citet{deepsvdd} and against ConvAD16 as reported by \citet{nguyen19b}. Again, we include nearest neighbours as one of the baselines. LeNet \cite{lecun1998gradient} architecture was used for all the baselines and \drocc for this experiment. \drocc consistently achieves the best performance on most classes, with gains as high as 20\% over DeepSVDD on some classes. An interesting observation is that for the classes Bird and Deer, Nearest Neighbour achieves competitive performance, beating all the other baselines. % We also benchmark the performance of \drocc on high resolution images that require the use of large architectures (MobileNetv2 \citet{mobilenetv2}). Table \ref{tab:ImageNet_CIFAR-Aug} compares \drocc with DeepSVDD on a subset of 10 classes from ImageNet.

As discussed in Section~\ref{sec:rw}, \cite{golan2018, hendryks2019} use domain specific transformations like flip and rotations to perform the AD task. The performance of these approaches is heavily dependent on the interaction between transformations and the dataset.
They would suffer significantly in more realistic settings where the images of \textit{normal} class itself have been captured from multiple orientations. For example, even in CIFAR, for \textit{airplane} class, the accuracy is relatively low (DROCC is 7\% more accurate) as the images have airplanes in multiple angles. In fact, we try to mimic a more realistic scenario by augmenting the CIFAR-10 data with flips and small rotations of angle $\pm30^{\circ}$. Table \ref{tab:CIFAR-Aug} depicts the drop in performance of GEOM when augmentations are added in the CIFAR-10 dataset. For example, on the \textit{deer} class of CIFAR-10 dataset, GEOM has an AUC of 87.8\%, which falls to 65.8\% when augmented CIFAR-10 is used whereas \drocc’s performance remains the same ($\sim72\%$).

Next, we benchmark the performance of \drocc on high resolution images that require the use of large modern neural architectures. Table~\ref{tab:ImageNet} presents the results of our experiments on ImageNet. \drocc continues to achieve the best results amongst all the compared methods. Autoencoder fails drastically on this dataset, so we exclude comparisons. For DeepSVDD and \drocc, MobileNetv2 \cite{sandler2018mobilenetv2} architecture is used. We observe that for all classes, except golf ball, \drocc outperforms the baselines. For instance, on French-Horn vs. rest problem, \drocc is 23\% more accurate than DeepSVDD.

\begin{table}[tp!]
\centering
\caption{Comparing \drocc against GEOM \cite{golan2018} on CIFAR-10 data flipped and rotated by a small angle of $\pm30$ degree}
\resizebox{\columnwidth}{!}{
\scriptsize
\begin{tabular}{lllll}
\hline
\textbf{CIFAR-10 Class} & \textbf{\begin{tabular}[c]{@{}l@{}}GEOM \\ (No Aug)\end{tabular}} & \textbf{\begin{tabular}[c]{@{}l@{}}\ \drocc \\ (No Aug)\end{tabular}} & \textbf{\begin{tabular}[c]{@{}l@{}}GEOM \\ (with Aug)\end{tabular}} & \textbf{\begin{tabular}[c]{@{}l@{}}\drocc \\ (with Aug)\end{tabular}} \\ \hline
Airplane                & 74.7 $\pm$ 0.4                                                    & 81.6 $\pm$ 0.2                                                                     & 62.4 $\pm$ 1.7                                                      & 77.2 $\pm$ 1.2                                                                       \\
Automobile              & 95.7 $\pm$ 0.0                                                    & 76.7 $\pm$ 1.0                                                                     & 71.8 $\pm$ 1.2                                                      & 74.5 $\pm$ 1.8                                                                       \\
Bird                    & 78.1 $\pm$ 0.4                                                    & 66.7 $\pm$ 1.0                                                                     & 50.6 $\pm$ 0.5                                                      & 67.5 $\pm$ 1.0                                                                       \\
Cat                     & 72.4 $\pm$ 0.5                                                    & 67.1 $\pm$ 1.5                                                                     & 52.5 $\pm$ 0.7                                                      & 68.8 $\pm$ 2.3                                                                       \\
Deer                    & 87.8 $\pm$ 0.2                                                    & 73.6 $\pm$ 2.0                                                                     & 65.7 $\pm$ 1.7                                                      & 71.1 $\pm$ 2.9                                                                       \\
Dog                     & 87.8 $\pm$ 0.1                                                    & 74.4 $\pm$ 1.9                                                                     & 69.6 $\pm$ 1.3                                                      & 71.3 $\pm$ 0.4                                                                       \\
Frog                    & 83.4  $\pm$ 0.5                                                   & 74.4 $\pm$ 0.9                                                                     & 68.3 $\pm$ 1.1                                                      & 71.2 $\pm$ 1.6                                                                       \\
Horse                   & 95.5 $\pm$ 0.1                                                    & 71.4 $\pm$ 0.2                                                                     & 84.8 $\pm$ 0.8                                                      &  63.5 $\pm$ 3.5                                                                                    \\
Ship                    & 93.3 $\pm$ 0.0                                                    & 80.0 $\pm$ 1.7                                                                     & 79.6  $\pm$ 2.2                                                     & 76.4 $\pm$ 3.5                                                                       \\
Truck                   & 91.3 $\pm$ 0.1                                                    & 76.2 $\pm$ 0.7                                                                     & 85.7 $\pm$ 0.5                                                      & 74.0 $\pm$ 1.0                                                                       \\ \hline
\end{tabular}
}
\label{tab:CIFAR-Aug}
\end{table}

\begin{table}[t]
\centering
\caption{Average AUC (with standard deviation) for one-vs-all anomaly detection on ImageNet. \drocc consistently achieves the best performance for all but one class.}
\resizebox{\columnwidth}{!}{
\begin{tabular}{llll}
\hline
\textbf{ImageNet Class}        & \textbf{\begin{tabular}[c]{@{}l@{}}Nearest \\ Neighbor\end{tabular}} & \textbf{DeepSVDD}          & \textbf{\drocc (Ours)}     \\ \hline
Tench            & 65.57                      & 65.14 $\pm$ 1.03           & \textbf{70.19 $\pm$ 1.7}  \\
English Springer  & 56.37                      & 66.45 $\pm$ 3.16           & \textbf{70.45 $\pm$ 4.99} \\
Cassette Player & 47.7                       & 60.47 $\pm$ 5.35           & \textbf{71.17 $\pm$ 1}    \\
Chainsaw       & 45.22                      & 59.43 $\pm$ 4.13           & \textbf{68.63 $\pm$ 1.86} \\
Church           & 61.35                      & 56.31 $\pm$ 4.23           & \textbf{67.46 $\pm$ 4.17} \\
French Horn     & 50.52                      & 53.06 $\pm$ 6.52           & \textbf{76.97 $\pm$ 1.67} \\
Garbage Truck   & 54.2                       & 62.15 $\pm$ 4.39           & \textbf{69.06 $\pm$ 2.34} \\
Gas Pump        & 47.43                      & 56.66 $\pm$ 1.49           & \textbf{69.94 $\pm$ 0.57} \\
Golf Ball       & 70.36                      & \textbf{72.23  $\pm$ 3.43} & 70.72 $\pm$ 3.83          \\
Parachute        & 75.87                      & 81.35 $\pm$ 3.73           & \textbf{93.5 $\pm$ 1.41}  \\ \hline
\end{tabular}}
\label{tab:ImageNet}
% \vspace{5pt}
\end{table}

\subsection{One-class Classification with Limited Negatives (\lfoc)}\label{sec:exp_fpr}

Recall that the goal in \lfoc is to learn a classifier that is accurate for both, the in-sample positive (or normal) class points and for the arbitrary out-of-distribution (OOD) negatives. Naturally, the key metric for this problem is False Positive Rate (FPR). In our experiments, we bound any method to have FPR to be smaller than a threshold, and under that constraint, we measure it's recall value, i.e., the fraction of true positives that are correctly predicted.

We compare \drocclf against the following baselines: a) Standard binary classifier: that is, we ignore the challenge of OOD negatives and treat the problem as a standard classification task, b) DeepSAD \cite{ruff2020deep}: a semi-supervised anomaly detection method but it is not explicitly designed to handle negatives that are very close to positives (OOD negatives) and c) \droccoe : Outlier exposure type extension where \drocc's anomaly detection loss (based on using Euclidean distance as a local distance measure over the manifold) is combined with standard cross-entropy loss over the given limited negative data.
Similar to the anomaly detection experiments, we use the same underlying network architecture across all the baselines. \par 

\subsubsection{Results}
\label{subsec:lfoc_res}
\textbf{Synthetic Data}: We sample $1024$ points in $\R^{10}$, where the first two coordinates are sampled from the 2D-sine wave, as in the previous section. Coordinates $3$ to $10$ are sampled from the spherical Gaussian distribution. Note that due to the 8 noisy dimensions, \drocc would be forced to set $r=\sqrt{d}$ where $d=10$, while the true low-dimensional manifold is restricted to only two dimensions. Consequently, it learns an inaccurate boundary as shown in Figure~\ref{fig:synthetic_c} and is similar to the boundary learned by OC-SVM and DeepSVDD; points that are predicted to be positive by \drocc are colored blue.
In contrast, \drocclf is able to learn that only the first two coordinates are useful for the distinction between positives and negatives, and hence is able to learn a skewed distance function, leading to an accurate decision boundary (see Figure~\ref{fig:synthetic_d}). 

\textbf{MNIST 0 vs. 1 Classification}: We consider an experimental setup on MNIST dataset, where the training data consists of Digit $0$, the {\em normal} class, and the Digit $1$ as the anomaly. During evaluation, in addition to samples from training distribution, we also have \textit{half zeros}, which act as challenging OOD points (close negatives). These \textit{half zeros} are generated by randomly masking 50\% of the pixels (Figure \ref{fig:mnist_cn}). BCE performs poorly, with a recall of $54\%$ only at a fixed FPR of $3\%$. \droccoe gives a recall value of $98.16\%$ outperforming DeepSAD by a margin of 7\%, which gives a recall value of $90.91\%$. \drocclf provides further improvement with a recall of $99.4\%$ at $3\%$ FPR.

\begin{figure}[t]%
    \centering
    \subfloat{{\includegraphics[width=6cm]{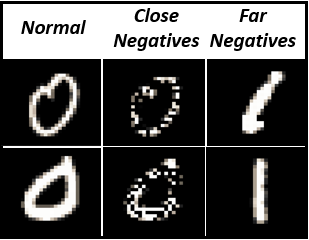}}}
    \caption{Sample postives, negatives and close negatives for MNIST digit 0 vs 1 experiment (OCLN).}
\label{fig:mnist_cn}
\end{figure}

\begin{figure}[t]%
    \centering
    \subfloat{{\includegraphics[width=8cm]{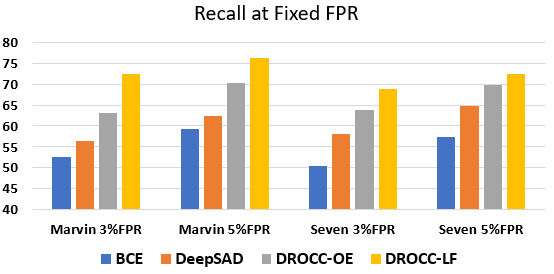}}}
    \caption{OCLN on Audio Commands: Comparison of Recall for key words — “Marvin” and “Seven” when the False Positive Rate(FPR) is fixed to be 3\% and 5\%. \drocclf is consistently about 10\% more accurate than all the baseline}
\label{fig:lfoc_result}
\vspace{5pt}
\end{figure}

\textbf{Wakeword Detection}: Finally, we evaluate \drocclf on the practical problem of wakeword detection with low FPR against arbitrary OOD negatives. To this end, we identify a keyword, say ``Marvin" from the audio commands dataset \cite{warden2018speech} as the {\em positive} class, and the remaining 34 keywords are labeled as the negative class. For training, we sample points uniformly at random from the above mentioned dataset. However, for evaluation, we sample positives from the train distribution, but negatives contain a few challenging OOD points as well. Sampling challenging negatives itself is a hard task and is the key motivating reason for studying the problem. So, we manually list close-by keywords to {\em Marvin} such as: {\em Mar}, {\em Vin}, {\em Marvelous} etc. We then generate audio snippets for these keywords via a speech synthesis tool \footnote{\url{https://azure.microsoft.com/en-in/services/cognitive-services/text-to-speech/}} with a variety of accents. % \pj{mention the tool used}

%Finally, we compare the recall of \drocclf against that of baselines for fixed FPR: a) $3\%$, b) $5\%$. %Figure~\ref{fig:lfoc_plots} (\pj{should we make a table instead?})
Figure ~\ref{fig:lfoc_result} shows that for 3\% and 5\% FPR settings, \drocclf is significantly more accurate than the baselines. For example, with FPR=$3\%$, \drocclf is 10\% more accurate than the baselines. We repeated the same experiment with the keyword: {\em Seven}, and observed a similar trend. See Table~\ref{tab:synth_keywords} in Appendix for the list of the close negatives which were synthesized for each of the keywords. In summary, \drocclf is able to generalize well against negatives that are ``close" to the true positives even when such negatives were not supplied with the training data. 
\section{Conclusions}\vspace*{-3pt}
\label{sec:conc}
We introduced \drocc method for deep anomaly detection. It models normal data points using a low-dimensional manifold, and hence can compare close point via Euclidean distance. Based on this intuition, \drocc's optimization is formulated as a saddle point problem which is solved via standard gradient descent-ascent algorithm. We then extended \drocc to \lfoc problem where the goal is to generalize well against {\em arbitrary} negatives, assuming positive class is well sampled and a small number of negative points are also available. Both the methods perform significantly better than strong baselines, in their respective problem settings. For computational efficiency, we simplified the projection set for both the methods which can perhaps slow down the convergence of the two methods. Designing optimization algorithms that can work with the stricter set is an exciting research direction. Further, we would also like to rigorously analyse \drocc, assuming enough samples from a low-curvature manifold.  Finally,  as \lfoc is an exciting problem that routinely comes up in a variety of real-world applications, we would like to apply \drocclf to a few high impact scenarios.

%\clearpage
\subsection*{Acknowledgments}
\label{sec:ack}
We are grateful to Aditya Kusupati, Nagarajan Natarajan, Sahil Bhatia and Oindrila Saha for helpful discussions and feedback. AR was funded by an Open Philanthropy AI Fellowship and Google PhD Fellowship in Machine Learning.  

\bibliographystyle{icml2020}
\bibliography{main}
\clearpage
\appendix
\addtolength{\dbltextfloatsep}{0.1in}
\addtolength{\textfloatsep}{0.1in}
\section{\lfoc}
\label{app:lfoc}
\subsection{\drocclf Proof}
\begin{proof}[Proof of Proposition~\ref{prop:lfoc}]
    Recall the problem: 
    \begin{equation*}\min_{\tilde{x}} \|\tilde{x}-z\|^2,\ s.t.,\  r^2\leq \|\tilde{x}-x\|_\Sigma^2 \leq \gamma^2 r^2.\end{equation*}
    Note that both the constraints cannot be active at the same time, so we can consider either $r^2\leq \|\tilde{x}-x\|_\Sigma^2$ constraint or $\|\tilde{x}-x\|_\Sigma^2 \leq \gamma^2 r^2$. Below, we give calculation when the former constraint is active, later's proof follows along same lines. 
    
    Let $\tau\leq 0$ be the Lagrangian multiplier, then the Lagrangian function of the above problem is given by: 
    $$L(\tilde{x},\tau)=\|\tilde{x}-z\|^2+\tau (\|\tilde{x}-x\|_\Sigma^2 -r^2).$$
    Using KKT first-order necessary condition \cite{boyd}, the following should hold for any optimal solution $\tilde{x},\tau$:
    $$\nabla_{\tilde{x}} L(\tilde{x},\tau)=0.$$
    That is, 
    $$\tilde{x}=(I+\tau \Sigma)^{-1}(z+\tau \cdot\Sigma x)=x+(I+\tau \cdot\Sigma)^{-1}\delta,$$
    where $\delta=z-x$. This proves the first part of the lemma. 
    
    Now, by using primal and dual feasibility required by the KKT conditions, we have: 
    \begin{equation*}\min_{\tau\leq 0} \|\tilde{x}-z\|^2,\ s.t.,\  \|\tilde{x}-x\|_\Sigma^2 \geq r^2,\end{equation*}
    where $\tilde{x}=(I+\tau \Sigma)^{-1}(z+\tau \cdot\Sigma x)=x+(I+\tau \cdot\Sigma)^{-1}\delta$. The lemma now follows by substituting $\tilde{x}$ above and by using the fact that $\Sigma$ is a diagonal matrix with $\Sigma(i,i)=\sigma_i$. 
\end{proof}

\subsection{\drocclf Algorithm}
See Algorithm Box ~\ref{alg:drocclf-nn}.

\begin{figure}
    \begin{minipage}{.49\textwidth}
        \begin{algorithm}[H]
            \algrenewcommand\algorithmicprocedure{}
            \caption{Training neural networks via \drocclf} \label{alg:drocclf-nn} \textbf{Input:} Training data $D=[(x_1, y_1),(x_2, y_2), \dots, (x_n,y_n)]$.  
            
            \textbf{Parameters:} Radius $r$, $\lambda\geq 0$, $\mu\geq 0$, step-size $\eta$, number of gradient steps $\ngrad$, number of initial training steps $\ninit$.
            
            \textbf{Initial steps:} For $B = 1, \hdots \ninit$
            
            \hspace*{\algorithmicindent} $X_B$: Batch of training inputs
            
            \hspace*{\algorithmicindent} $\theta = \theta - \text{Gradient-Step}\Big(\sum \limits_{\substack{(x,y) \\ \in X_B}} \ell(f_\theta(x), y) \Big)$
            
            \textbf{DROCC steps:} For $B = \ninit, \hdots \ninit + N$ 
            
            \hspace*{\algorithmicindent} $X_B$: Batch of \textit{normal} training inputs ($y=1$)
            
            %% \hspace*{\algorithmicindent} \hspace*{\algorithmicindent} \textbf{Adversarial search:}
            
            \hspace*{\algorithmicindent} $\forall x \in X_B: h \sim \sN(0, I_{d})$
            %% ~~(Steps below in parallel across batch)

            \hspace*{\algorithmicindent} \textbf{Adversarial search:} For $i = 1, \hdots \ngrad$
            
            \hspace*{\algorithmicindent} \hspace*{\algorithmicindent} 1. $\ell(h) = \ell(f_\theta(x + h), -1)$
            %% Loss is high if $x+h$ is classified as positive
            
            \hspace*{\algorithmicindent} \hspace*{\algorithmicindent} 2. $h = h + \eta \frac{\nabla_h \ell(h)}{\| \nabla_h \ell(h) \|}$
            %% Find point that has highest score of being positive
            
            \hspace*{\algorithmicindent} \hspace*{\algorithmicindent} 3. $h = \text{Projection given by Proposition~\ref{prop:lfoc}}(\delta=h)$
            %% ~~Normalize gradient
            
            \hspace*{\algorithmicindent} $\ell^{itr} =  \lambda\| \theta \|^2 + \sum \limits_{\substack{(x,y) \\ \in X_B}} \ell(f_\theta(x), y) +\mu  \ell(f_\theta(x + h), -1)  $

            \hspace*{\algorithmicindent} $\theta = \theta - \text{Gradient-Step}(\ell^{itr})$  
        \end{algorithm}
    \end{minipage}\hspace*{5pt}
    \begin{minipage}{.49\textwidth}

    \end{minipage}
\end{figure}

\begin{figure}[t]
    \label{fig:sphere_expt_fig}
    \centering
    \subfloat[\label{fig:sphere_manifold}]{{\includegraphics[width=3cm]{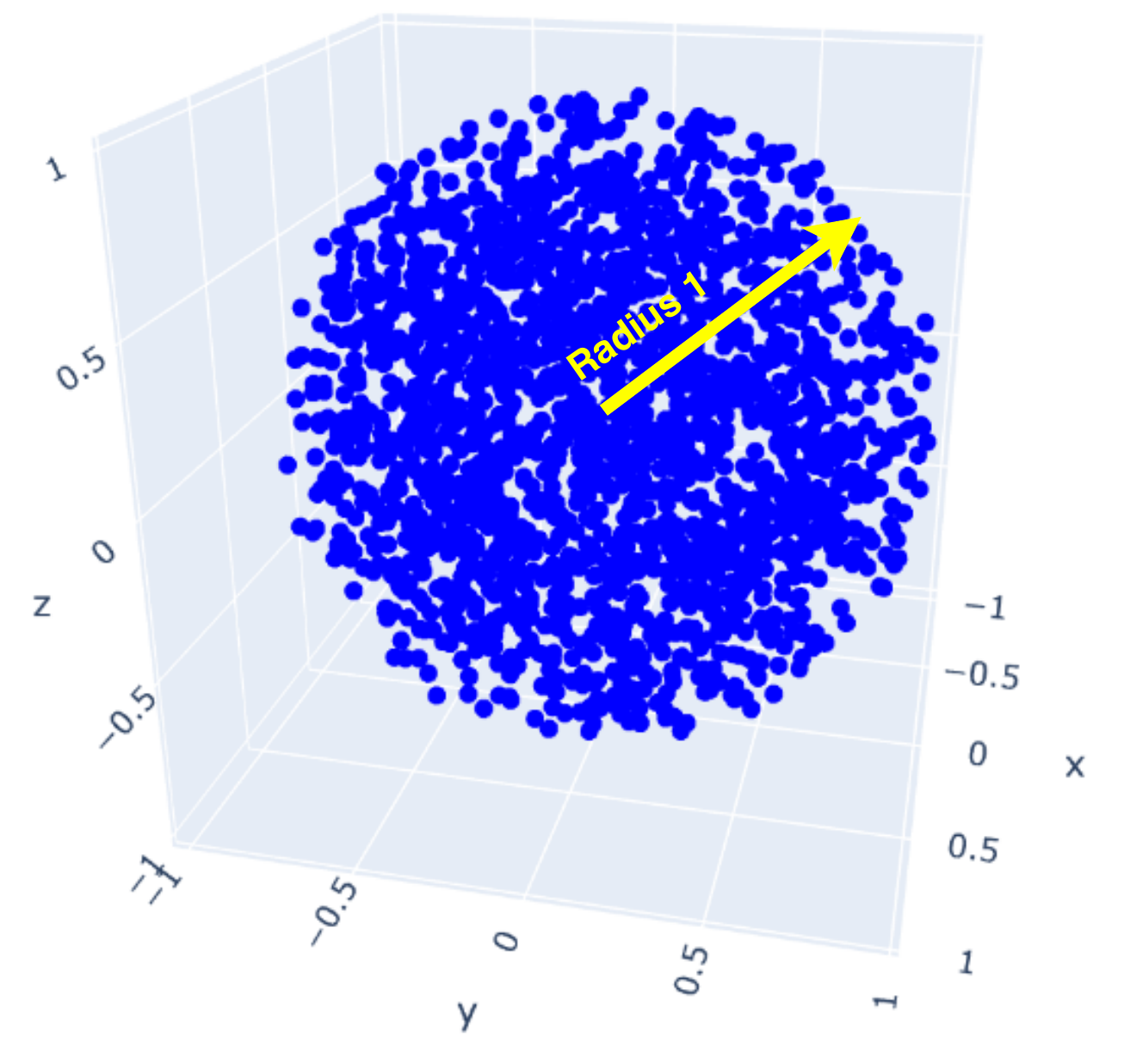}}}\qquad
    \subfloat[\label{fig:sphere_with_neg}]{{\includegraphics[width=3cm]{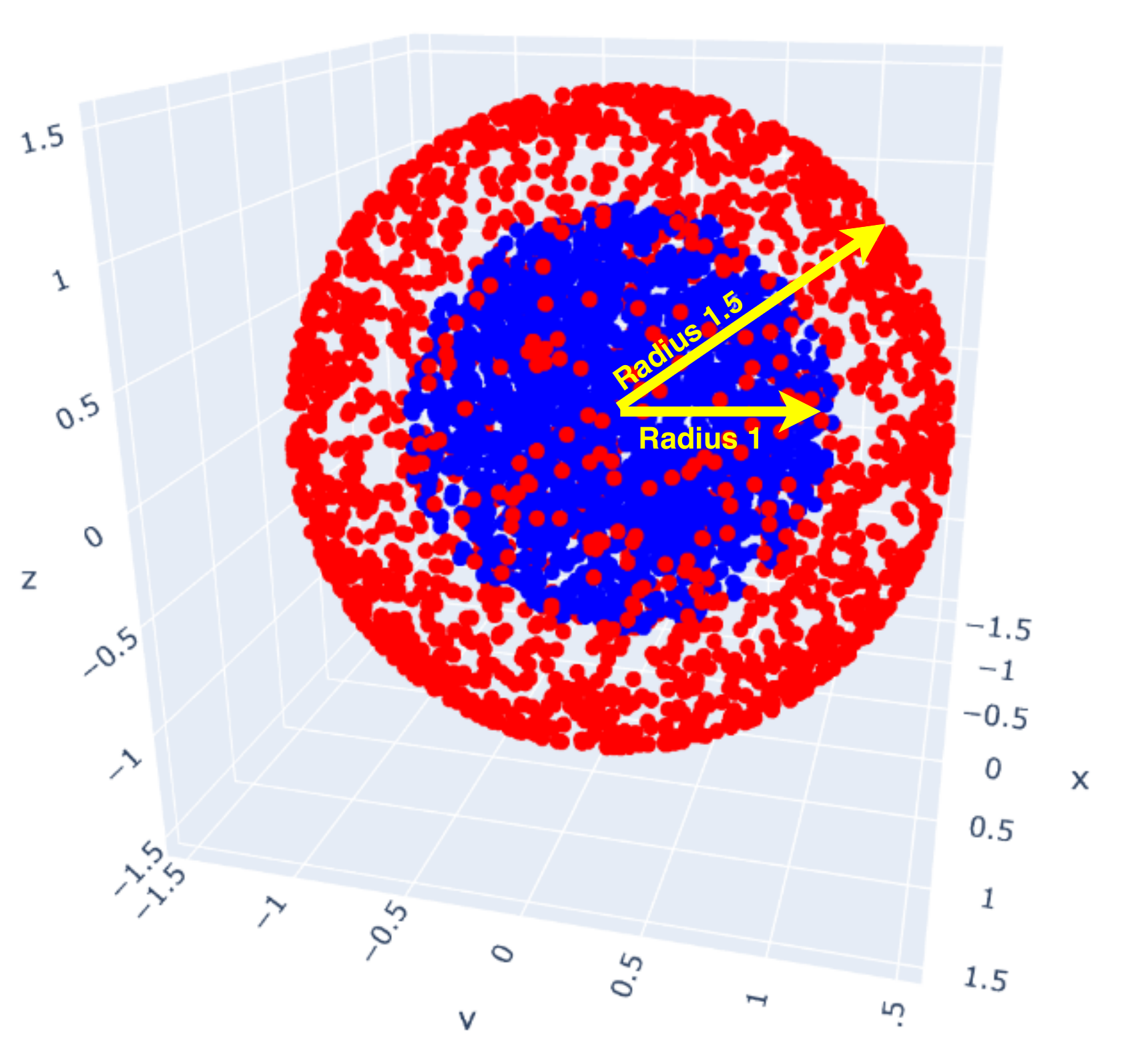}}}\\
    %\vspace*{-5pt}
    \caption{(a) Spherical manifold (a unit sphere) that captures the normal data distribution. Points are uniformly sampled from the volume of the unit sphere. (b) OOD points (red) are sampled on the \textit{surface} of a sphere of varying radius. Table~\ref{tab:synthetic_sphere} shows AUC values with varying radius. }
\end{figure}

\begin{table*}[t]
    \centering
    \caption{Average AUC for Spherical manifold experiment (Section~\ref{sec:sphere_exp}). Normal points are sampled uniformly from the volume of a unit sphere and OOD points are sampled from the \textit{surface} of a unit sphere of varying radius (See Figure~\ref{fig:sphere_with_neg}). Again \drocc outperforms all the baselines when the OOD points are quite close to the normal distribution.}
    {\scriptsize
        \begin{tabular}{llllll}
            \hline
            \textbf{Radius}        & \textbf{\begin{tabular}[c]{@{}l@{}}Nearest \\ Neighbor\end{tabular}} & \textbf{OC-SVM} & \textbf{AutoEncoder} & \textbf{DeepSVDD}          & \textbf{\drocc (Ours)}     \\ \hline
            1.2            & 100 $\pm$ 0.00  & 92.00 $\pm$ 0.00 & 91.81 $\pm$ 2.12 & 93.26 $\pm$ 0.91 & 99.44 $\pm$ 0.10  \\
            1.4            & 100 $\pm$ 0.00  & 92.97 $\pm$ 0.00 & 97.85 $\pm$ 1.41 & 98.81 $\pm$ 0.34 & 99.99 $\pm$ 0.00  \\
            1.6            & 100 $\pm$ 0.00  & 92.97 $\pm$ 0.00 & 99.92 $\pm$ 0.11 & 99.99 $\pm$ 0.00 & 100.00 $\pm$ 0.00  \\
            1.8            & 100 $\pm$ 0.00  & 91.87 $\pm$ 0.00 & 99.98 $\pm$ 0.04 & 100.00 $\pm$ 0.00 & 100.00 $\pm$ 0.00  \\
            2.0            & 100 $\pm$ 0.00  & 91.83 $\pm$ 0.00 & 100 $\pm$ 0.00 & 100.00 $\pm$ 0.00 & 100.00 $\pm$ 0.00  \\
    \end{tabular}}
    \label{tab:synthetic_sphere}
\end{table*}

\section{Synthetic Experiments}
\label{app:model-vis}
\subsection{1-D Sine Manifold}
\label{sec:sine_exp}
In Section \ref{sec:exp_sine} we presented results on a synthetic dataset of 1024 points sampled from a 1-D sine wave (See Figure \ref{fig:synthetic_a}). We compare \drocc to other anomaly detection methods by plotting the decision boundaries on this same dataset. Figure~\ref{fig:model_vis} shows the decision boundary for a) \drocc b) OC-SVM with RBF kernel c) OC-SVM with 20-degree polynomial kernel  d) DeepSVDD. 
All methods are trained only on positive points from the 1-D manifold. 
\begin{figure}[t!]
    \centering
    \subfloat[\label{fig:synthetic_b1}]{{\includegraphics[width=3.3cm]{figures/learnt_sine.png}}}\qquad
    \subfloat[\label{fig:sine_rbf}]{{\includegraphics[width=3.3cm]{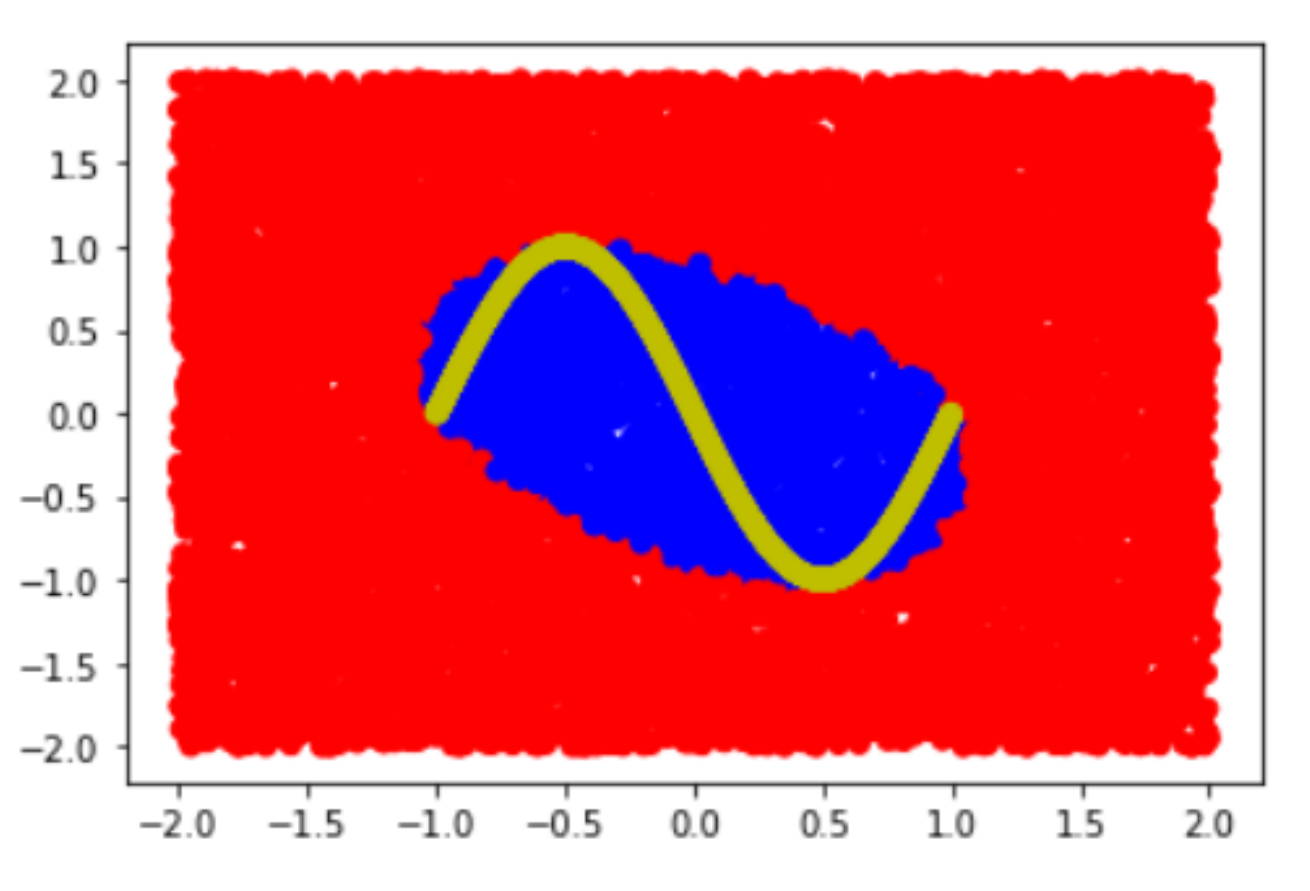}}}\\
    \subfloat[\label{fig:sine_poly}]{{\includegraphics[width=3.3cm]{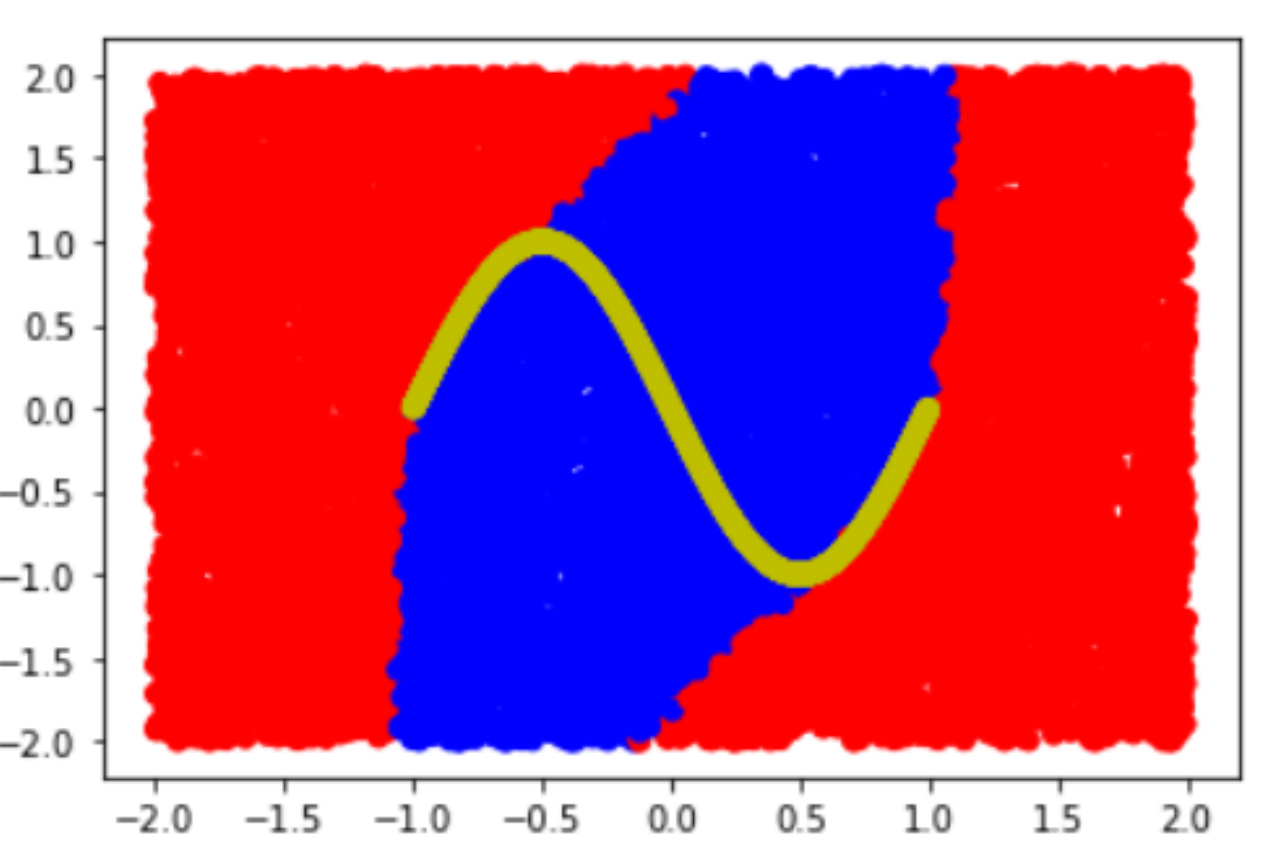}}}\qquad
    \subfloat[\label{fig:sine_deepsvdd}]{{\includegraphics[width=3.1cm]{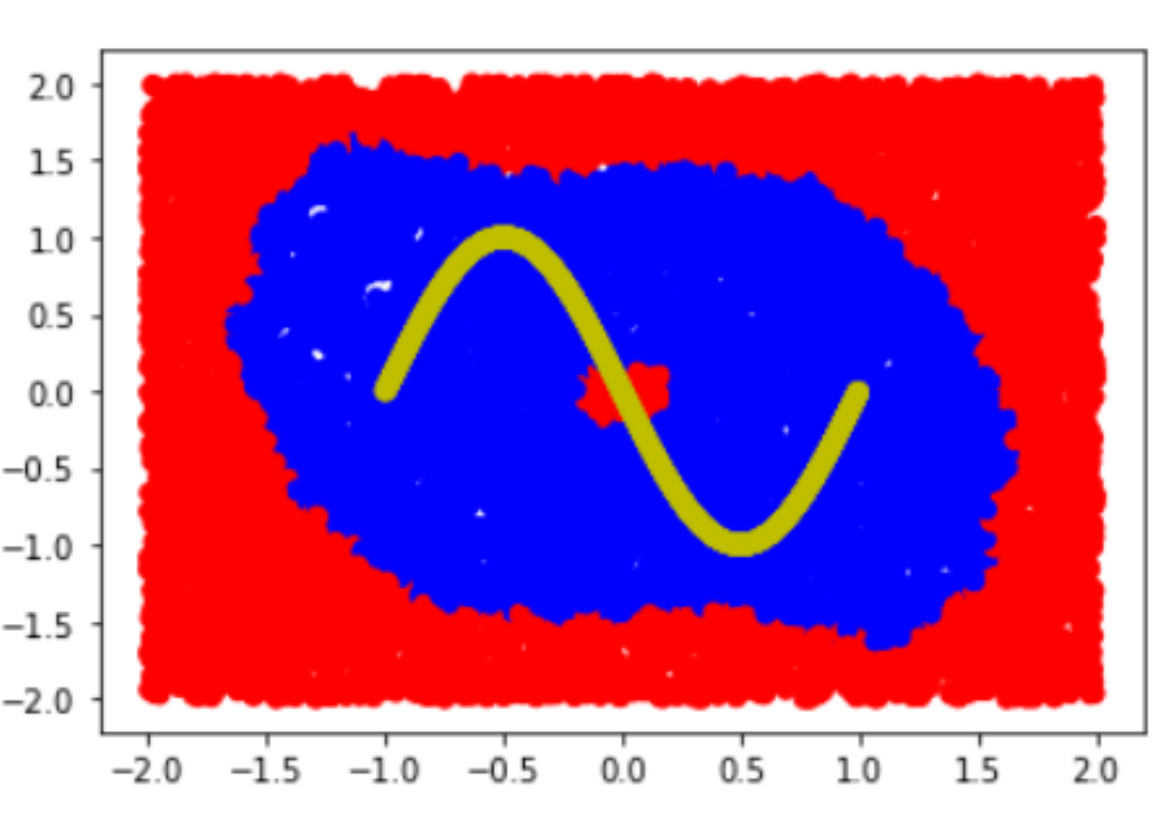}}}
    \caption{(a) Decision boundary of \drocc trained only on the positive points lying on the 1-D sine manifold in Figure \ref{fig:synthetic_a}. Blue represents points classified as normal and red classified as abnormal. (b) Decision boundary of classical OC-SVM using RBF kernel and same experiment settings as in (a). Yellow sine wave just shows the underlying train data. (c) Decision boundary of classical OC-SVM using a 20-degree polynomial kernel. (d) Decision boundary of DeepSVDD.}
\label{fig:model_vis}
\end{figure}

We further evaluate these methods for varied sampling of negative points near the positive manifold. Negative points are sampled from a 1-D sine manifold vertically displaced in both directions (See Figure \ref{fig:displaced_manifold}). Table~\ref{tab:synthetic} compares \drocc against various baselines on this dataset.

\begin{figure}[t!]
    \centering
    \subfloat{{\includegraphics[width=4cm]{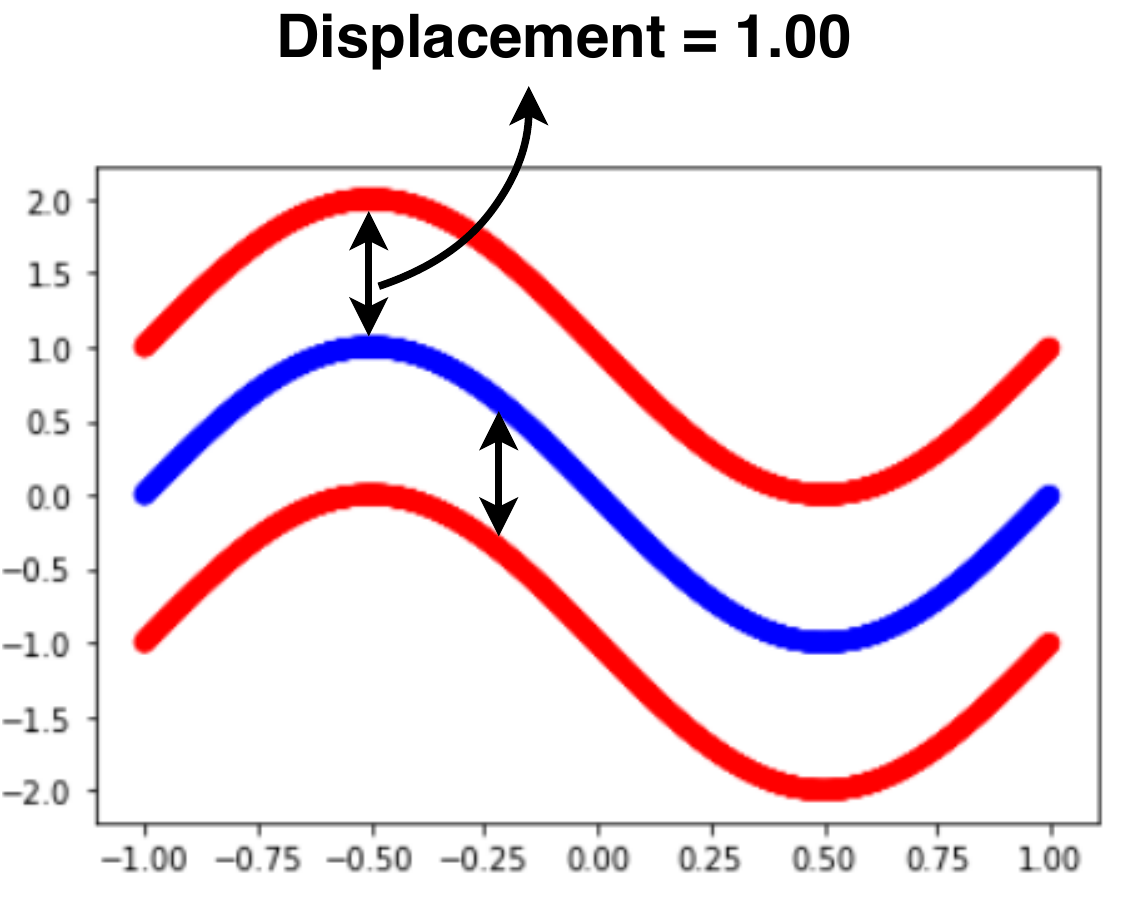}}}\qquad
    \caption{Illustration of the negative points sampled at various displacements of the sine wave; used for reporting the AUC values in the Table \ref{tab:synthetic}. In this figure, vertical displacement is 1.0. Blue represents the positive points (also the training data) and red represents the negative/OOD points}
\label{fig:displaced_manifold}
\end{figure}

\begin{table*}[t]
\centering
\caption{Average AUC for the synthetic 1-D Sine Wave manifold experiment (Section~\ref{sec:sine_exp}). Normal points are sampled from a sine wave and OOD points from a vertically displaced manifold (See Figure~\ref{fig:displaced_manifold}). The results demonstrate that only \drocc is able to capture the manifold tightly}

{\scriptsize
\begin{tabular}{llllll}
\hline
\textbf{\begin{tabular}[c]{@{}l@{}}Vertical \\ Displacement\end{tabular}}        & \textbf{\begin{tabular}[c]{@{}l@{}}Nearest \\ Neighbor\end{tabular}} & \textbf{OC-SVM} & \textbf{AutoEncoder} & \textbf{DeepSVDD}          & \textbf{\drocc (Ours)}     \\ \hline
0.2            & 100 $\pm$ 0.00  & 56.99 $\pm$ 0.00 & 52.48 $\pm$ 1.15 & 65.91 $\pm$ 0.64 & 96.80 $\pm$ 0.65  \\
0.4            & 100 $\pm$ 0.00  & 68.84 $\pm$ 0.00 & 58.59 $\pm$ 0.61 & 78.18 $\pm$ 1.67 & 99.31 $\pm$ 0.80  \\
0.6            & 100 $\pm$ 0.00  & 76.95 $\pm$ 0.00 & 66.59 $\pm$ 1.21 & 82.85 $\pm$ 1.96 & 99.92 $\pm$ 0.11  \\
0.8            & 100 $\pm$ 0.00  & 81.73 $\pm$ 0.00 & 77.42 $\pm$ 3.62 & 86.26 $\pm$ 1.69 & 99.98 $\pm$ 0.01  \\
1.0            & 100 $\pm$ 0.00  & 88.18 $\pm$ 0.00 & 86.14 $\pm$ 2.52 & 90.51 $\pm$ 2.62 & 100 $\pm$ 0.00  \\
2.0            & 100 $\pm$ 0.00  & 98.56 $\pm$ 0.00 & 100 $\pm$ 0.00 & 100 $\pm$ 0.00 & 100 $\pm$ 0.00  \\
\end{tabular}}
\label{tab:synthetic}
\end{table*}

\subsection{Spherical Manifold}
\label{sec:sphere_exp}
OC-SVM and DeepSVDD try to find a minimum enclosing ball for the whole set of positive points, while DROCC assumes that the true points low on a low dimensional manifold.
We now test these methods on a different synthetic dataset: spherical manifold where the positive points are within a sphere, as shown in Figure~\ref{fig:sphere_manifold}. Normal/Positive points are sampled uniformly from the volume of the unit sphere. Table~\ref{tab:synthetic_sphere} compares \drocc against various baselines when the OOD points are sampled on the \textit{surface} of a sphere of varying radius (See Figure~\ref{fig:sphere_with_neg}). \drocc again outperforms all the baselines even in the case when minimum enclosing ball would suit the best. Suppose instead of neural networks, we were operating with purely linear models, then \drocc also essentially finds the minimum enclosing ball (for a suitable radius $r$). If $r$ is too small, the training doesn't converge since there is no separating boundary). Assuming neural networks are implicitly regularized to find the simplest boundary, \drocc with neural networks also learns essentially a minimum enclosing ball in this case, however, at a slightly larger radius. Therefore, we get $100\%$ AUC only at radius $1.6$ rather than $1 + \epsilon$ for some very small $\epsilon$.

\begin{figure*}[t]
    \hspace*{\fill}%
    \subfloat[\label{fig:abl_r_airplane}]{{
            \resizebox{0.31\linewidth}{!}{
                \begin{tikzpicture}
                \begin{axis}[
                title={CIFAR-10 Airplane: AUC vs Radius},
                xlabel={Radius},
                ymax=90,
                ymin=50,
                ylabel={AUC}
                ]
                \addplot [color=blue,mark=o,]
                     plot [error bars/.cd, y dir = both, y explicit]
                     table[y error index=2]{figures/ablation_cifar_0_rad.dat};
                \end{axis}
                \end{tikzpicture}
    }}}
    \hfill
        \subfloat[\label{fig:abl_r_deer}]{{
            \resizebox{0.31\linewidth}{!}{
                \begin{tikzpicture}
                \begin{axis}[
                title={CIFAR-10 Deer: AUC vs Radius},
                xlabel={Radius},
                ymax=90,
                ymin=45,
                ylabel={AUC}
                ]
                \addplot [color=blue,mark=o,]
                     plot [error bars/.cd, y dir = both, y explicit]
                     table[y error index=2]{figures/ablation_cifar_4_rad.dat};
                \end{axis}
                \end{tikzpicture}
    }}}
    \hfill
    \subfloat[\label{fig:abl_r_truck}]{{
            \resizebox{0.30\linewidth}{!}{
                \begin{tikzpicture}
                \begin{axis}[
                title={CIFAR-10 Truck: AUC vs Radius},
                xlabel={Radius},
                ymax=90,
                ymin=50,
                ylabel={AUC}
                ]
                \addplot [color=blue,mark=o,]
                     plot [error bars/.cd, y dir = both, y explicit]
                     table[y error index=2]{figures/ablation_cifar_9_rad.dat};
                \end{axis}
                \end{tikzpicture}
    }}}
    \hspace*{\fill}%
    \caption{Ablation Study : Variation in the performance \drocc when $ r$ (with $\gamma=1$) is changed from the optimal value. }
    \label{fig:ablation_cifar_radius}
\end{figure*}
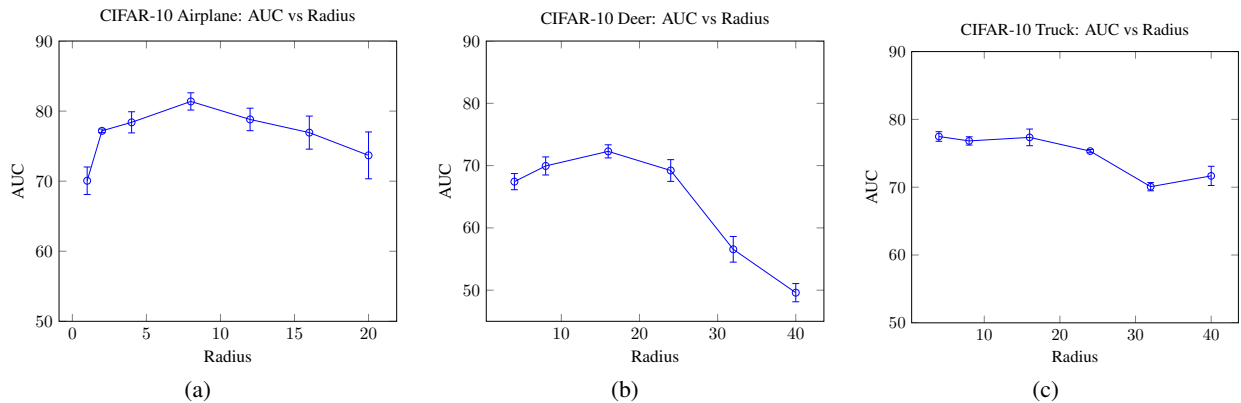

\begin{figure*}[t]
    \hspace*{\fill}%
    \subfloat[\label{fig:abl_lamda_airplane}]{{
            \resizebox{0.30\linewidth}{!}{
                \begin{tikzpicture}
                \begin{axis}[
                title={CIFAR-10 Airplane: AUC vs $\mu$},
                xlabel={ $\mu$},
                ymax=90,
                ymin=50,
                ylabel={AUC}
                ]
                \addplot [color=blue,mark=o,]
                     plot [error bars/.cd, y dir = both, y explicit]
                     table[y error index=2]{figures/ablation_cifar_0_lambda.dat};
                \end{axis}
                \end{tikzpicture}
    }}}
    \hfill
        \subfloat[\label{fig:abl_lamda_deer}]{{
            \resizebox{0.30\linewidth}{!}{
                \begin{tikzpicture}
                \begin{axis}[
                title={CIFAR-10 Deer: AUC vs  $\mu$},
                xlabel={ $\mu$},
                ymax=90,
                ymin=50,
                ylabel={AUC}
                ]
                \addplot [color=blue,mark=o,]
                     plot [error bars/.cd, y dir = both, y explicit]
                     table[y error index=2]{figures/ablation_cifar_4_lambda.dat};
                \end{axis}
                \end{tikzpicture}
    }}}
    \hfill
    \subfloat[\label{fig:abl_lamda_truck}]{{
            \resizebox{0.30\linewidth}{!}{
                \begin{tikzpicture}
                \begin{axis}[
                title={CIFAR-10 Truck: AUC vs  $\mu$},
                xlabel={ $\mu$},
                ymax=90,
                ymin=50,
                ylabel={AUC}
                ]
                \addplot [color=blue,mark=o,]
                     plot [error bars/.cd, y dir = both, y explicit]
                     table[y error index=2]{figures/ablation_cifar_9_lambda.dat};
                \end{axis}
                \end{tikzpicture}
    }}}
    \hspace*{\fill}%
    \caption{Ablation Study : Variation in the performance of \drocc with $\mu$ \eqref{eq:drocc-loss} which is the weightage given to the loss from adversarially sampled negative points }
    \label{fig:ablation_cifar_lambda}
\end{figure*}
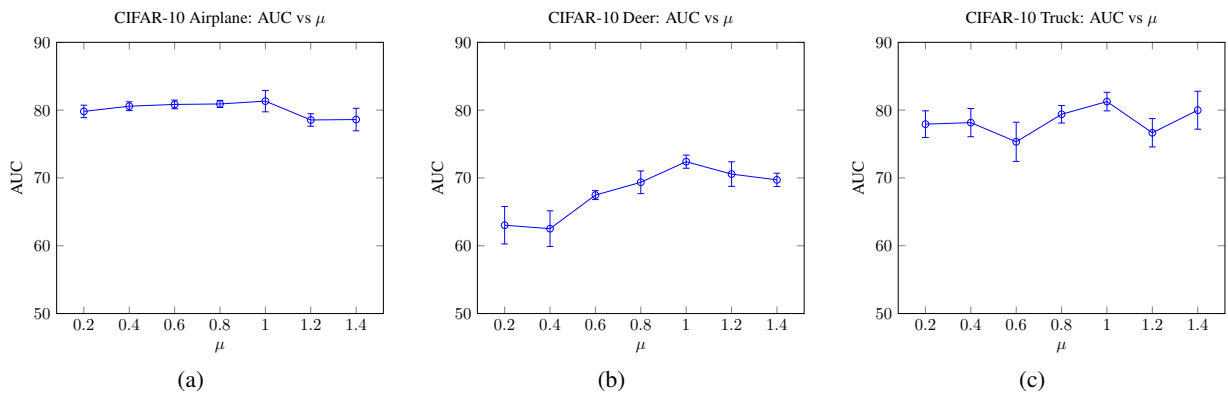

\begin{figure}[t]%
    \centering
    \subfloat{{\includegraphics[width=8cm]{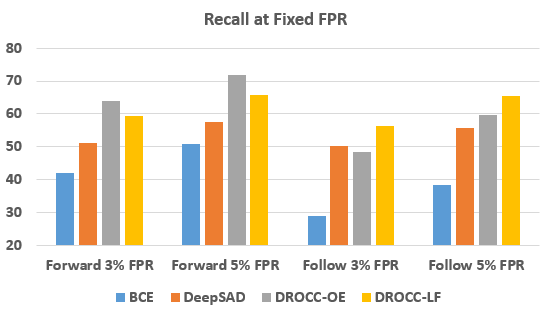}}}
    \caption{OCLN on Audio Commands: Comparison of Recall for key words — “Forward” and “Follow” when the False Positive Rate(FPR) is fixed to be 3\% and 5\%.}
\label{fig:lfoc_sup}
\vspace{5pt}
\end{figure}

\begin{table}[t!]
    \centering
    \caption{Ablation Study on CIFAR-10: Sampling negative points randomly in the set $\negset{i}(r)$ (\droccrand) instead of gradient ascent (\drocc).}
    {\scriptsize
        \begin{tabular}{llll}
            \hline
            \textbf{CIFAR Class} &  \textbf{\begin{tabular}[c]{@{}l@{}}One-Class \\ Deep SVDD\end{tabular}} & \textbf{\drocc}  & \textbf{\droccrand}  \\ \hline
            Airplane  & 61.7$\pm$4.1 & 81.66 $\pm$ 0.22 &  79.67 $\pm$ 2.09  \\
            Automobile  & 65.9$\pm$2.1 & 76.74 $\pm$ 0.99 &  73.48 $\pm$ 1.44  \\
            Bird  & 50.8$\pm$0.8 & 66.66 $\pm$ 0.96 &  62.76 $\pm$ 1.59  \\
            Cat  & 59.1$\pm$1.4 & 67.13 $\pm$ 1.51 &  67.33 $\pm$ 0.72  \\
            Deer  & 60.9$\pm$1.1 & 73.62 $\pm$ 2.00 &  56.09 $\pm$ 1.19  \\
            Dog  & 65.7$\pm$2.5 & 74.43 $\pm$ 1.95 &  65.88 $\pm$ 0.64  \\
            Frog  & 67.7$\pm$2.6 & 74.43 $\pm$ 0.92 &  74.82 $\pm$ 1.77  \\
            Horse  & 67.3$\pm$0.9 & 71.39 $\pm$ 0.22 &  62.08 $\pm$ 2.03  \\
            Ship  & 75.9$\pm$1.2 & 80.01 $\pm$ 1.69 &  80.04 $\pm$ 1.71  \\
            Truck  & 73.1$\pm$1.2 & 76.21 $\pm$ 0.67 &  70.80 $\pm$ 2.73  \\
            \hline
    \end{tabular}}
    \label{tab:cifar_random}
\end{table}

\begin{table}[t]
    \centering
    \caption{Synthesized near-negatives for keywords in Audio Commands}
    {\scriptsize
        \begin{tabular}{lllll}
            \hline
            \textbf{Marvin} & \textbf{Forward} & \textbf{Seven} & \textbf{Follow} \\ \hline
            mar             & for              & one            & fall  \\
            marlin          & fervor           & eleven         & fellow \\
            arvin           & ward             & heaven         & low \\
            marvik          & reward           & when           & hollow \\
            arvi            & onward           & devon          & wallow \\ \hline
        \end{tabular}
    }
    \label{tab:synth_keywords}
\end{table}

\begin{table}[t]
    \centering
    \caption{Hyperparameters: Tabular Experiments}
    {\scriptsize
        \begin{tabular}{llllll}
            \hline
            \textbf{Dataset} & \textbf{Radius} & \textbf{$\mu$} & \textbf{Optimizer} & \textbf{\begin{tabular}[c]{@{}l@{}}Learning\\ Rate\end{tabular}} & \textbf{\begin{tabular}[c]{@{}l@{}}Adversarial\\ Ascent \\ Step Size\end{tabular}} \\ \hline
            
            Abalone                                                                    & 3              & 1.0               & Adam               & $10^{-3}$                                                              & 0.01                                                                                \\
            Arrhythmia                                                                    & 16              & 1.0               & Adam               & $10^{-4}$                                                              & 0.01                                                                                \\
            Thyroid                                                                   & 2.5              & 1.0             & Adam               & $10^{-3}$                                                              & 0.01                                                                                \\ \hline
        \end{tabular}
    }
    \label{tab:params_tabular}
\end{table}

\begin{table}[t]
    \centering
    \caption{Hyperparameters: CIFAR-10}
    {\scriptsize
        \begin{tabular}{llllll}
            \hline
            \textbf{Class} & \textbf{Radius} & \textbf{$\mu$} & \textbf{Optimizer} & \textbf{\begin{tabular}[c]{@{}l@{}}Learning \\ Rate\end{tabular}} & \textbf{\begin{tabular}[c]{@{}l@{}}Adversarial \\ Ascent \\ Step Size\end{tabular}} \\ \hline
            Airplane       & 8               & 1             & Adam               & 0.001                                                             & 0.001                                                                               \\
            Automobile     & 8              & 0.5             & SGD            & 0.001                                                              & 0.001                                                                               \\
            Bird           & 40              & 0.5             & Adam               & 0.001                                                             & 0.001                                                                               \\
            Cat            & 28              & 1             & SGD            & 0.001                                                             & 0.001                                                                               \\
            Deer           & 32              & 1               & SGD            & 0.001                                                             & 0.001                                                                               \\
            Dog            & 24              & 0.5             & SGD            & 0.01                                                              & 0.001                                                                               \\
            Frog           & 36              & 1             & SGD               & 0.001                                                              & 0.01                                                                               \\
            Horse          & 32              & 0.5             & SGD            & 0.001                                                              & 0.001                                                                               \\
            Ship           & 28              & 0.5               & SGD               & 0.001                                                              & 0.001                                                                               \\
            Truck          & 16              & 0.5             & SGD            & 0.001                                                             & 0.001                                                                               \\ \hline
        \end{tabular}
    }
    \label{tab:params_cifar}
\end{table}

\begin{table}[t]
    \centering
    \caption{Hyperparameters: ImageNet}
    {\scriptsize
        \begin{tabular}{llllll}
            \hline
            \textbf{Class}    & \textbf{Radius} & \textbf{$\mu$} & \textbf{Optimizer} & \textbf{\begin{tabular}[c]{@{}l@{}}Learning\\ Rate\end{tabular}} & \textbf{\begin{tabular}[c]{@{}l@{}}Adversarial \\ Ascent \\ Step Size\end{tabular}} \\ \hline
            Tench             & 30              & 1               & SGD            & 0.01                                                             & 0.001                                                                               \\
            English\_springer & 16              & 1               & SGD            & 0.001                                                            & 0.001                                                                               \\
            Cassette\_player  & 40              & 1               & Adam               & 0.005                                                            & 0.001                                                                               \\
            Chain\_saw        & 20              & 1               & SGD            & 0.01                                                             & 0.001                                                                               \\
            Church            & 40              & 1               & Adam               & 0.01                                                             & 0.001                                                                               \\
            French\_horn      & 20              & 1               & SGD            & 0.05                                                             & 0.001                                                                               \\
            Garbage\_truck    & 30              & 1               & Adam               & 0.005                                                            & 0.001                                                                               \\
            Gas\_pump         & 30              & 1               & Adam               & 0.01                                                             & 0.001                                                                               \\
            Golf\_ball        & 30              & 1               & SGD            & 0.01                                                             & 0.001                                                                               \\
            Parachute         & 12              & 1               & Adam               & 0.001                                                            & 0.001                                                                               \\ \hline
    \end{tabular}}
    \label{tab:params_imagenet}
\end{table}

\begin{table}[t]
    \centering
    \caption{Hyperparameters: Timeseries Experiments}
    {\scriptsize
        \begin{tabular}{llllll}
            \hline
            \textbf{Dataset} & \textbf{Radius} & \textbf{$\mu$} & \textbf{Optimizer} & \textbf{\begin{tabular}[c]{@{}l@{}}Learning\\ Rate\end{tabular}} & \textbf{\begin{tabular}[c]{@{}l@{}}Adversarial\\ Ascent \\ Step Size\end{tabular}} \\ \hline
            Epilepsy                                                                    & 10              & 0.5               & Adam               & $10^{-5}$                                                              & 0.1                                                                                \\
            \begin{tabular}[c]{@{}l@{}}Audio\\ Commands\end{tabular}                                                                   & 16              & 1.0             & Adam               & $10^{-3}$                                                              & 0.1                                                                                \\ \hline
        \end{tabular}
    }
    \label{tab:params_timeseries}
\end{table}

\begin{table}[t]
    \centering
    \caption{Hyperparameters: LFOC Experiments}
    {\scriptsize
        \begin{tabular}{llllll}
            \hline
            \textbf{Keyword} & \textbf{Radius} & \textbf{$\mu$} & \textbf{Optimizer} & \textbf{\begin{tabular}[c]{@{}l@{}}Learning\\ Rate\end{tabular}} & \textbf{\begin{tabular}[c]{@{}l@{}}Adversarial \\ Ascent \\ Step Size\end{tabular}} \\ \hline
            Marvin                                                                    & 32              & 1               & Adam                  & 0.001                                                           & 0.01                                                                                 \\
            Seven                                                                    & 36              & 1            & Adam                  & 0.001                                                           & 0.01                                                                                 \\
            Forward                                                                    & 40              & 1               & Adam                  & 0.001                                                           & 0.01                                                                                 \\
            
            Follow                                                                   & 20             & 1            & Adam                  & 0.0001                                                           & 0.01                                                                                     \\ \hline
    \end{tabular}}
    \label{tab:params_lfoc}
\end{table}

\section{LFOC Supplementary Experiments}
In Section~\ref{subsec:lfoc_res}, we compared \drocclf with various baselines for the \lfoc task where the goal is to learn a classifier that is accurate for both the positive class and the arbitrary OOD negatives. Figure~\ref{fig:lfoc_sup} compares the recall obtained by different methods on 2 keywords "Forward" and "Follow" with 2 different FPR. Table~\ref{tab:synth_keywords} lists the close negatives which were synthesized for each of the keywords. 

\section{Ablation Study}
\label{app:ablation}
\subsection{Hyper-Parameters}
Here we analyze the effect of two important hyper-parameters --- radius $r$ of the ball outside, which we sample negative points (set $\negset{i}(r)$), and $\mu$ which is the weightage given to the loss from adversarially generated negative points (See Equation~\ref{eq:drocc-loss}). We set $\gamma = 1$ and hence recall that the negative points are sampled to be at a distance of r from the positive points. 

Figure~\ref{fig:abl_r_airplane}, \ref{fig:abl_r_deer} and \ref{fig:abl_r_truck} show the performance of \drocc with varied values of $r$ on the CIFAR-10 dataset. The graphs demonstrate that sampling negative points quite far from the manifold (setting $r$ to be very large), causes a drop in the accuracy since now \drocc would be covering the normal data manifold loosely causing high false positives. At the other extreme, if the radius is set too small, the decision boundary could be too close to the positive and hence lead to overfitting and difficulty in training the neural network. Hence, setting an appropriate radius value is very critical for the good performance of \drocc.

Figure~\ref{fig:abl_lamda_airplane}, \ref{fig:abl_lamda_deer} and \ref{fig:abl_lamda_truck} show the effect of $\mu$ on the performance of \drocc on CIFAR-10.

\subsection{Importance of gradient ascent-descent technique}
In the Section~\ref{sec:drocc_formulation} we formulated the \drocc's optimization objective as a saddle point problem (Equation~\ref{eq:drocc-loss}). We adopted the standard gradient descent-ascent technique to solve the problem replacing the $\ell_p$ ball with $\negset{i}(r)$.
Here, we present an analysis of DROCC without the gradient ascent part i.e., we now sample points at random in the set of negatives $\negset{i}(r)$. We call this formulation as \droccrand. Table~\ref{tab:cifar_random}  shows the drop in performance when negative points are sampled randomly on the CIFAR-10, hence emphasizing the importance of gradient ascent-descent technique. Since $\negset{i}(r)$ is high dimensional, random sampling does not find points close enough to manifold of positive points.

%%%%%%%%%%%%%%%%%%%%%%%%%%%
%%%%%%%%%%%%%%%%%%%%%%%%%%%
%%%%%%%%%%%%%%%%%%%%%%%%%%
%EXPERIMENT DETAILS
%%%%%%%%%%%%%%%%%%%%%%%%%%%
%%%%%%%%%%%%%%%%%%%%%%%%%%%
%%%%%%%%%%%%%%%%%%%%%%%%%%%

\section{Experiment details and Hyper-Parameters for Reproducibility}
\subsection{Tabular Datasets}
Following previous work, we use a base network consisting of a single fully-connected layer with 128 units for the deep learning baselines. For the classical algorithms, the features are input to the model. Table~\ref{tab:params_tabular} lists all the hyper-parameters for reproducibility.

\subsection{CIFAR-10}
DeepSVDD uses the representations learnt in the penultimate layer of LeNet \cite{lecun1998gradient} for minimizing their one-class objective. To make a fair comparison, we use the same base architecture. However, since \drocc formulates the problem as a binary classification task, we add a final fully connected layer over the learned representations to get the binary classification scores. Table~\ref{tab:params_cifar} lists the hyper-parameters which were used to run the experiments on the standard test split of CIFAR-10.

\subsection{ImageNet-10}
MobileNetv2 \cite{mobilenetv2} was used as the base architecture for DeepSVDD and \drocc. Again we use the representations from the penultimate layer of MobileNetv2 for optimizing the one-class objective of DeepSVDD. The width multiplier for MobileNetv2 was set to be 1.0. Table~\ref{tab:params_imagenet} lists all the hyper-parameters.

\subsection{Time Series Datasets}
To keep the focus only on comparing \drocc against the baseline formulations for OOD detection, we use a single layer LSTM for all the experiments on Epileptic Seizure Detection, and the Audio Commands dataset. The hidden state from the last time step is used for optimizing the one class objective of DeepSVDD. For \drocc we add a fully connected layer over the last hidden state to get the binary classification scores. Table~\ref{tab:params_timeseries} lists all the hyper-parameters for reproducibility. 

\subsection{LFOC Experiments on Audio Commands}
For the Low-FPR classification task, we use keywords from the Audio Commands dataset along with some synthesized near-negatives. The training set consists of 1000 examples of the keyword and 2000 randomly sampled examples from the remaining classes in the dataset. The validation and test set consist of 600 examples of the keyword, the same number of words from other classes of Audio Commands dataset and an extra synthesized 600 examples of close negatives of the keyword (see Table~\ref{tab:synth_keywords}) 
A single layer LSTM, along with a fully connected layer on top on the hidden state at last time step was used. Similar to experiments with DeepSVDD, DeepSAD uses the hidden state of the final timestep as the representation in the one-class objective. An important aspect of training DeepSAD is the pretraining of the network as the encoder in an autoencoder. We also tuned this pretraining to ensure the best results.

\end{document}